\newtheorem{theorem}{Theorem}
\newtheorem{example}{Example}
\newtheorem{corollary}{Corollary}
\newtheorem{lemma}{Lemma}
\newtheorem*{definition}{Definition}
\def\psfancypar#1#2{\begingroup\def\par{\endgraf\endgroup\lineskiplimit=0pt}
               \setbox2=\hbox{\large\sc #2}
               \newdimen\tmpht \tmpht \ht2 \advance\tmpht by \baselineskip
               \font\hhuge=Times-Bold at \tmpht
               \setbox1=\hbox{{\hhuge #1}}
               \count7=\tmpht \count8=\ht1
               \divide\count8 by 1000 \divide\count7 by \count8
               \tmpht=.001\tmpht\multiply\tmpht by \count7
               \font\hhuge=Times-Bold at \tmpht
               \setbox1=\hbox{{\hhuge #1}}
               \noindent
                \hangindent1.05\wd1
               \hangafter=-2 {\hskip-\hangindent
               \lower1\ht1\hbox{\raise1.0\ht2\copy1}%
                \kern-0\wd1}\copy2\lineskiplimit=-1000pt}
\newcommand{\beq}{\begin{equation}}
\newcommand{\eeq}{\end{equation}}
\newcommand{\bqa}{\begin{eqnarray}}
\newcommand{\eqa}{\end{eqnarray}}
\newcommand{\bqn}{\begin{eqnarray*}}
\newcommand{\eqn}{\end{eqnarray*}}
\newcommand{\nn}{\nonumber}
\newcommand{\be}{\begin{enumerate}}
\newcommand{\ee}{\end{enumerate}}
\newcommand{\bi}{\begin{itemize}}
\newcommand{\ei}{\end{itemize}}
\newcommand{\bd}{\begin{description}}
\newcommand{\ed}{\end{description}}
\newcommand{\ba}{\begin{array}}
\newcommand{\ea}{\end{array}}
\newcommand{\bde}{\begin{definition}}
\newcommand{\ede}{\end{definition}}
\newcommand{\bex}{\begin{example}}
\newcommand{\eex}{\end{example}}
\def\boxit#1{\vbox{\hrule\hbox{\vrule\kern3pt
        \vbox{\kern3pt#1\kern3pt}\kern3pt\vrule}\hrule}}
\def\reals{ { {\rm  I \kern-0.15em R }  } }
\def\complex{ {\,{{\rm C} \kern-0.50em \raise0.20ex {  |}}\, }}
\def\0bf{{\bf 0}}
\def\1bf{{\bf 1}}
\def\2bf{{\bf 2}}
\def\3bf{{\bf 3}}
\def\4bf{{\bf 4}}
\def\5bf{{\bf 5}}
\def\6bf{{\bf 6}}
\def\7bf{{\bf 7}}
\def\8bf{{\bf 8}}
\def\9bf{{\bf 9}}
\def\Rbf{{\bf R}}
\def\Rxx{\Rbf_{\ssstyle X\kern-.1em X}}
\let\ssstyle=\scriptscriptstyle
\def\Kout{\setbox1=\hbox{\Huge\bf K}\hbox to
1.05\wd1{\hspace{.05\wd1}
\def\Sout{\setbox1=\hbox{\Huge\bf S}\hbox to 1.05\wd1{\hspace{.05\wd1}

\begin{document}
	
	%
	
	%
	\runningtitle{Universal Hypothesis Testing with Kernels}
	\runningauthor{Shengyu Zhu, Biao Chen, Pengfei Yang, Zhitang Chen}
	\twocolumn[
	\aistatstitle{Universal Hypothesis Testing with Kernels:\\Asymptotically Optimal Tests for Goodness of Fit}
	
	\aistatsauthor{Shengyu Zhu \And Biao Chen \And  Pengfei Yang \And Zhitang Chen}

\aistatsaddress{Huawei Noah's Ark Lab \\Hong Kong \And  Syracuse University \\Syracuse, NY  \And Cubist Systematic Strategies\\New York, NY \And Huawei Noah's Ark Lab \\Hong Kong } ]
	
\begin{abstract} 
 We characterize the asymptotic performance of nonparametric goodness of fit testing. The exponential decay rate of the type-II error probability is used as the asymptotic performance metric, and a test is optimal if it achieves the maximum rate subject to a constant level constraint on the type-I error probability. We show that two classes of Maximum Mean Discrepancy (MMD) based tests attain this optimality on $\mathbb R^d$, while the quadratic-time Kernel Stein Discrepancy (KSD) based tests achieve the maximum exponential decay rate under a relaxed level constraint. Under the same performance metric, we proceed to show that the quadratic-time MMD based two-sample tests are also optimal for general two-sample problems, provided that kernels are bounded continuous and characteristic. Key to our approach are Sanov's theorem from large deviation theory and the weak metrizable properties of the MMD and KSD. 
\end{abstract} 
\section{Introduction}
Goodness-of-fit tests play an important role in machine learning and statistical analysis. Given a model distribution $P$ and sample $x^n:=\{x_i\}_{i=1}^n$ originating from an unknown distribution $Q$, the goal is to decide whether to accept the null hypothesis that $Q$ matches $P$, or the alternative hypothesis that $Q$ and $P$ are different. Traditional (parametric) approaches may require  space partitioning or closed-form integrals \cite{Baringhaus1988,Beirlant1994,Bowman1993adaptive,Gyorfi1991}. They become computationally intractable to machine learning applications that involve high dimensional data and complicated models \cite{Koller2009,Salakhutdinov2015,Sutherland2017GeneandCrit}.

Recently, several efficient tests have been proposed based on Reproducing Kernel Hilbert Space (RKHS) embedding \cite{Muandet2017KME,SmolaHilbert}. One is to conduct a Maximum Mean Discrepancy (MMD) based two-sample test by drawing samples from the model distribution $P$ \cite{Lloyd2015ModelCrit}. A difficulty with this approach is to determine the number of samples drawn from $P$ relative to $n$, the sample number of the test sequence. Other tests are based on classes of Stein transformed RKHS functions \cite{Chwialkowski2016Goodness,Gorham2015Stein,Gorham2017measuringKSD,Liu2016GoodnessFit,Oates2017SteinRKHS},  where the test statistic is the norm of the smoothness-constrained function with the largest expectation under $Q$ and is referred to as the Kernel Stein Discrepancy (KSD). The KSD based tests only require knowing the density function of $P$ up to the normalization constant, and do not need to  compute integrals or draw samples. Additionally, constructing explicit features of distributions results in a linear-time goodness-of-fit test that is also more interpretable \cite{Jitkrittum2017linearGoodness}.

Motivated by their good performance in practice, this paper investigates the statistical optimality of these kernel based goodness-of-fit tests, a long-standing open problem in information theory and statistics \cite{Cover2006,Csiszar2004,Hoeffding1965}. Given distribution $P$, the hypothesis testing between $H_0: x^n\sim P$ and $H_1:x^n\sim Q$ can be extremely hard when $Q$ is arbitrary but unknown, as opposed to the simple case when $Q$ is known. With independent sample and a known $Q$, the type-II error probability of an optimal test vanishes exponentially fast w.r.t.~the sample size $n$, and the exponential decay rate coincides with the Kullback-Leibler Divergence (KLD) between $P$ and $Q$ (cf.~Lemma~\ref{lem:SteinLemma}). This motivates the so-called universal hypothesis testing problem, originally proposed by \citet{Hoeffding1965}: {\emph{does there exist a nonparametric goodness-of-fit test that achieves the same optimal exponential decay rate as in the simple hypothesis testing problem where $Q$ is known?}}  Over the years, universally optimal tests only exist when the sample space is finite, i.e., when $P$ and $Q$ are both multinomial \cite{Hoeffding1965,Unnikrishnan2011UHT}. For a more general sample space, attempts have been largely fruitless with the only exception of \cite{Zeitouni1991universal,Yang2017RobustKLD}. Their results, however, were obtained at the cost of a weaker optimality and the proposed tests are rather complicated due to use of L\'evy-Prokhorov metric. We remark that even the existence of such a test remains unknown when the sample space is non-finite.

{\bf Contributions.} We first show a simple kernel test, comparing the MMD between the target distribution and the sample empirical distribution with a proper threshold, as an optimal approach to the universal hypothesis testing problem when the sample space is Polish, locally compact Hausdorff, e.g., $\mathbb R^d$. To the best of our knowledge, this is the first result on the universal optimality for a general, non-finite sample space. Taking into account the difficulty of obtaining closed-form integrals for non-Gaussian distributions, we then follow \cite{Lloyd2015ModelCrit} to cast the original problem into a two-sample problem. We establish the same optimality for the quadratic-time kernel two-sample tests proposed in \cite{Gretton2012}, provided that $\omega(n)$ independent samples are drawn from $P$. For the KSD based tests, the constant level constraint on the type-I error probability is difficult to satisfy for all possible sample sizes. By relaxing the constraint to an asymptotic one and assuming additional conditions, we establish the optimal exponential decay rate of the type-II error probability for the quadratic-time KSD based tests proposed in \cite{Chwialkowski2016Goodness,Liu2016GoodnessFit}. 

As another contribution, we proceed to investigate the quadratic-time kernel two-sample tests  in a more general setting where the sample sizes scale in the same order, e.g., when the two sets of samples have the same size. We show that the type-II error probability also vanishes exponentially fast. The obtained exponential decay rate is further shown to be optimal among all two-sample tests under the same level constraint, and is independent of particular kernels  provided that they are bounded continuous and characteristic.

Key to our approach are Sanov's theorem from large deviation theory \cite{Dembo2009} and the weak metrizable properties of the MMD \cite{SimSch16Kernel,Sriperumbudur2016EstPM} and the KSD \cite{Gorham2017measuringKSD}, which enable us to directly investigate the acceptance region defined by the test, rather than using the test statistic as an intermediate. 

{\bf Paper Outline.}  Section~\ref{sec:problem} introduces the asymptotic statistical criterion used in this paper and formally states the problem of universal hypothesis testing. Section~\ref{sec:relatedworks} reviews related works.  In Section~\ref{sec:mainresults}, we present two classes of MMD based tests that are optimal for universal hypothesis testing and discuss their implications to goodness of fit testing. Section~\ref{sec:KSD} considers the KSD based goodness-of-fit tests and Section~\ref{sec:two}  establishes the universal optimality of the quadratic-time MMD based two-sample tests in a more general setting. We conclude this paper in Section~\ref{sec:conclusion}.
\section{Problem}
\label{sec:problem}
Throughout this paper, let $\mathcal X$ be a Polish space (i.e., a separable completely metrizable topological space) and $\mathcal P$ the set of Borel probability measures defined on $\mathcal X$. Given a distribution $P\in\mathcal P$ and sample $x^n$ from an unknown distribution $Q\in\mathcal P$, we want to determine whether to accept $H_0:P=Q$ or $H_1:P\neq Q$. A test $\Omega(n)=\{\Omega_0(n),\Omega_1(n)\}$ partitions $\mathcal X^{n}$ into two disjoint sets with $\Omega_0(n)\cup\Omega_1(n)=\mathcal{X}^{n}$. If $x^n\in\Omega_i(n),i=0,1$, a decision is made in favor of hypothesis $H_i$. We say that $\Omega_0(n)$ is an acceptance region for the null hypothesis $H_0$ and $\Omega_1(n)$ the rejection region. A type-I error is made when $P=Q$ is rejected while $H_0$ is true, and a type-II error occurs when $P=Q$ is accepted despite $H_1$ being true. The two error probabilities are $P(\Omega_1(n)):=\mathbf P_{x^n\sim P}\left(x^n\in\Omega_1(n)\right)$ and $Q(\Omega_0(n)):=\mathbf P_{x^n\sim Q}\left(x^n\in\Omega_0(n)\right)$ with $Q\neq P$, respectively. 

In general, the two error probabilities can not be minimized simultaneously. A commonly used approach, the so-called Neyman-Pearson approach \cite{Casella2002}, is to set an upper bound $\alpha$ on the type-I error probability and considers only level $\alpha$ tests, i.e., tests with $P(\Omega_1(n))\leq\alpha$. However, similar to the two-sample problem \cite{Gretton2012}, it is not possible to distinguish distributions with high probability at a given, fixed sample, without prior assumptions on the difference between $P$ and $Q$. We therefore consider an asymptotic statistical criterion as the performance metric. 

A level $\alpha$ test is said to be consistent if the type-II error probability vanishes in the large sample limit. Such a test is exponentially consistent when the error probability additionally vanishes exponentially fast w.r.t.~the sample size, that is, when \[\liminf_{n\to\infty}-\frac{1}{n}\log Q(\Omega_0(n))>0.\]
The above limit is also referred to as the type-II error exponent in information theory. Clearly, the larger the error exponent, the faster the error probability decreases in the  sample limit. Under this criterion, an optimal test would achieve the maximum type-II error exponent while satisfying the level constraint. Error exponent is a widely used metric in source coding and channel coding \cite{Cover2006}, and is closely related to two other asymptotic statistical criteria \cite{Serfling}. In particular, the Chernoff index equals the minimum of the type-I and type-II error exponents, and the exact Bahadur slope is equivalent to twice of the type-I error exponent with a constant constraint on the type-II error probability.

We present a useful lemma which gives the optimal type-II error exponent of any level $\alpha$ test for simple hypothesis testing between two known distributions. Let $D(P\|Q)$ denote the KLD between $P$ and $Q$. That is, $D(P\|Q)=\mathbf E_P\log(dP/dQ)$ where $dP/dQ$ stands for the Radon-Nikodym derivative of $P$ w.r.t.~$Q$ when it exists, and  $D(P\|Q)=\infty$ otherwise \cite{Dembo2009}.

\begin{lemma}[Chernoff-Stein Lemma {\cite{Cover2006, Dembo2009}}]
	\label{lem:SteinLemma}
	Let $x^n$ i.i.d.~$\sim R$. Consider simple hypothesis testing between $H_0: R = P\in\mathcal P$ and $H_1:R=Q\in\mathcal P$,
	with $0<D(P\|Q)<\infty$. Given $0<\alpha<1$, let $\Omega^*(n,P,Q)=\{\Omega_0^*(n,P,Q), \Omega_1^*(n, P, Q)\}$ be the optimal level $\alpha$ test with which the type-II error probability is minimized for each $n$. It follows that
	\[\lim_{n\to\infty}-\frac{1}{n}\log Q(\Omega_0^*(n, P,Q)) = D(P\|Q).\]
\end{lemma}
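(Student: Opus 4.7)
The plan is to sandwich $\lim_{n\to\infty}-\frac{1}{n}\log Q(\Omega_0^*(n,P,Q))$ between matching achievability and converse bounds, both hinging on concentration under $P$ of the normalized log-likelihood ratio $L_n(x^n) := \frac{1}{n}\sum_{i=1}^n \log \frac{dP}{dQ}(x_i)$ around its expectation $D(P\|Q)$. The hypothesis $D(P\|Q) < \infty$ implies $P \ll Q$, so $dP/dQ$ is well-defined $P$-a.s.; a short preliminary check (bounding the negative part of $\log(dP/dQ)$ via $-\log t \leq 1/t$ and using $\int (dP/dQ)^{-1}\,dP \leq 1$) puts $\log(dP/dQ)$ in $L^1(P)$, which makes the weak law of large numbers applicable to $L_n$.

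For achievability ($\liminf \geq D(P\|Q)$), it suffices to exhibit, for arbitrary $\epsilon > 0$, an explicit level $\alpha$ test whose type-II error exponent is at least $D(P\|Q) - \epsilon$, since $\Omega^*$ can only do at least as well. The natural candidate is the log-likelihood typical set
\[A_n^\epsilon := \left\{x^n : |L_n(x^n) - D(P\|Q)| \leq \epsilon\right\}.\]
The weak law gives $P(A_n^\epsilon) \to 1$, so choosing $\Omega_0(n) = A_n^\epsilon$ yields a level $\alpha$ test for all sufficiently large $n$. A one-line change of measure gives
\[Q(A_n^\epsilon) = \int_{A_n^\epsilon} \frac{dQ^n}{dP^n}\,dP^n \leq e^{-n(D(P\|Q)-\epsilon)},\]
since on $A_n^\epsilon$ we have $dP^n/dQ^n \geq e^{n(D(P\|Q)-\epsilon)}$; sending $\epsilon \to 0$ delivers the claimed exponent.

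For the converse ($\limsup \leq D(P\|Q)$), let $B_n := \Omega_0^*(n,P,Q)$, so $P(B_n) \geq 1-\alpha$. Intersecting with the same typical set and applying the change of measure in the opposite direction,
\[Q(B_n) \geq Q(B_n \cap A_n^\epsilon) \geq e^{-n(D(P\|Q)+\epsilon)}\, P(B_n \cap A_n^\epsilon),\]
and the inclusion-exclusion estimate $P(B_n \cap A_n^\epsilon) \geq P(B_n) + P(A_n^\epsilon) - 1 \geq 1-\alpha - o(1)$ (valid because $\alpha < 1$ and $P(A_n^\epsilon) \to 1$) keeps the prefactor bounded away from zero. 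Taking logarithms gives $-\frac{1}{n}\log Q(B_n) \leq D(P\|Q) + \epsilon + o(1)$, and letting $\epsilon \to 0$ closes the gap.

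There is essentially no serious obstacle: once the $L^1(P)$-integrability of $\log(dP/dQ)$ is verified, both halves are standard information-theoretic typicality arguments and the only quantitative input is the weak law of large numbers applied under $P$. The lemma is invoked in the paper purely as a benchmark, so the intended reading is that any nonparametric test whose type-II exponent matches $D(P\|Q)$ under a level $\alpha$ constraint must be regarded as asymptotically optimal in the Chernoff-Stein sense.
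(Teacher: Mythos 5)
The paper invokes this lemma as a classical result and cites it without proof; your argument reproduces the standard change-of-measure typicality proof found in the cited references (Cover and Thomas, Dembo and Zeitouni), and it is correct. The preliminary $L^1(P)$ check, the achievability via the log-likelihood typical set, and the converse via intersecting an arbitrary level-$\alpha$ acceptance region with the same typical set are exactly what is needed, and the single point that merits explicit care — that on $A_n^\epsilon$ the measures $P^n$ and $Q^n$ are mutually absolutely continuous because $dP^n/dQ^n$ is bounded above and below by positive constants there, so both directions of the change of measure are legitimate even though globally only $P\ll Q$ — is implicit but sound in your write-up.
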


{\bf Problem Statement.} Let $\Omega(n)=\{\Omega_0(n), \Omega_1(n)\}$ be a nonparametric goodness-of-fit test of level $\alpha$. With $x^n$ i.i.d.~$\sim Q$ under the alternative hypothesis, the corresponding type-II error probability $Q(\Omega_0(n))$ can not be lower than $Q(\Omega_0^*(n,P,Q))$. As such, Chernoff-Stein lemma indicates that its type-II error exponent is bounded by $D(P\|Q)$. For any given $P$, the problem is to find a goodness-of-fit test $\Omega(n)$, if it exists, so that 
{\it
	\begin{enumerate}
		\item under $H_0: P=Q$,~$\mathbf P_{x^n}(\Omega_1(n))\leq\alpha$,
		\item under $H_1: P\neq Q$,  \[\begin{aligned}
		\liminf_{n\to\infty}-\frac{1}{n}\log\mathbf P_{x^n}(\Omega_0(n)) = D(P\|Q)
		\end{aligned},\]
		for arbitrary $Q$ with $0<D(P\|Q)<\infty$,
	\end{enumerate}
}
giving rise to the name {\it universal} hypothesis testing. 
\section{Related Work} 
\label{sec:relatedworks}
The decay rate of the type-II error probability has been widely investigated for existing kernel based tests. For the simple kernel tests in \cite{Altun2006,Szabo2015Two, Szabo2016learning} and the kernel two-sample tests in \cite{Chwialkowski2015fast,Fukumizu2009,Gretton2009, Gretton2012OptKernelLarge,Sutherland2017GeneandCrit,Zaremba2013Btest}, analysis is based on the test statistics, through their asymptotic distributions or some probabilistic bounds on their convergence to the population statistics. The resulting characterizations depend on kernels and are loose in general. For the KSD based tests, current statistical characterization is limited to consistency; the asymptotic distributions of the test statistics either have no closed form \cite{Chwialkowski2016Goodness} or are hard to analyze \cite{Jitkrittum2017linearGoodness,Liu2016GoodnessFit}.

Other asymptotic statistical criteria have also been used for comparing nonparametric goodness-of-fit tests.  \citet{Jitkrittum2017linearGoodness} used the approximate Bahadur slope and showed that their linear-time test has greater relative efficiency than the linear-time test proposed in  \cite{Liu2016GoodnessFit}, assuming a mean-shift alternative. However, it is not clear whether such a result holds for a more general alternative. \citet{BalaMinimaxOptGOF} investigated the detection boundary and showed that the simple kernel test is suboptimal under this criterion. A minimax optimal test was then proposed for a composite alternative, where the worst-case performance w.r.t.~a set of probability measures is optimized. In contrast, our optimality criterion is much stronger in that the optimality must hold for any distribution defining the alternative hypothesis; specifically, the nonparametric test must achieve the maximum type-II error exponent $D(P\|Q)$ for any $Q$ satisfying $0<D(P\|Q)<\infty$.

\section{Maximum Mean Discrepancy  Based Goodness-of-Fit Tests}
\label{sec:mainresults}
This section studies two classes of MMD based tests for universal hypothesis testing, followed by discussions on related aspects. We begin with a brief review of the MMD and of Sanov's theorem.

Let $\mathcal H_k$ be an RKHS defined on $\mathcal X$ with reproducing kernel $k$.  The mean embedding of $P\in\mathcal P$ in $\mathcal H_k$ is a unique element $\mu_{k}(P)\in\mathcal H_k$ such that $\mathbf{E}_{y\sim P}f(y)=\langle f, \mu_k(P)\rangle_{\mathcal H_k}$ for all $f\in\mathcal H_k$ \cite{Berlinet2011RKHS}. We assume that $k$ is  bounded continuous, hence the existence of $\mu_k(P)$ is guaranteed by the Riesz representation theorem. The MMD between two probability measures $P$ and $Q$ is defined as the RKHS-distance between their mean embeddings, which can be expressed as
\begin{align}
&~d_k(P,Q)\nn\\=&~\|\mu_k(P)-\mu_k(Q)\|_{\mathcal H_k}\nn\\=&~\left(\mathbf E_{yy'}k(y,y')+\mathbf{E}_{xx'}k(x,x')-2\mathbf E_{yx}k(y,x)\right)^{1/2},\nn
\end{align}
where $y,y'$ i.i.d.~$\sim P$ and $x,x'$ i.i.d.~$\sim Q$. 

If the mean embedding $\mu_k$ is an injective map, then the kernel $k$ is said to be characteristic and the MMD $d_k$ becomes a metric on $\mathcal{P}$ \cite{Sriperumbudur2010hilbert}. A weak metrizable property of $d_k$ has also been established recently. Consider the weak topology on $\mathcal P$ induced by the weak convergence: a sequence of probability measures $P_l\to P$ weakly if and only if $\mathbf E_{y\sim P_l} f(y)\to\mathbf E_{y\sim P} f(y)$ for every bounded continuous function $f:\mathcal X\to\mathbb R$. The following theorem states when $d_k$ metrizes this weak convergence.\footnote{Indeed, \citet{SimSch16Kernel} show that $\mathcal X$ only needs to be locally compact Hausdorff. We require $\mathcal X$ be Polish in order to utilize Sanov's theorem.}
\begin{theorem}[{\cite[Theorem 55]{SimSch16Kernel}, \cite[Theorem~3.2]{Sriperumbudur2016EstPM}}]
	\label{thm:MMDmetrize}
	If $\mathcal X$ is Polish, locally compact Hausdorff, and $k$ is continuous and characteristic, then $d_k$ metrizes the weak convergence on $\mathcal P$.
\end{theorem}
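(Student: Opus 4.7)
The plan is to prove both directions separately: that weak convergence implies $d_k$-convergence (the easy direction), and conversely that $d_k$-convergence implies weak convergence, using a subsequence tightness argument combined with the injectivity of the mean embedding.

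For the easy direction, assume $P_l \to P$ weakly on $\mathcal{X}$. Since $k$ is bounded and continuous, the function $(x,y)\mapsto k(x,y)$ is bounded continuous on $\mathcal{X}\times\mathcal{X}$, and the product measures satisfy $P_l\times P_l \to P\times P$ and $P_l\times P \to P\times P$ weakly. Consequently, each of the three expectations defining $d_k(P_l,P)^2$ converges to the corresponding expectation in $d_k(P,P)^2=0$, giving $d_k(P_l,P)\to 0$.

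For the hard direction, assume $d_k(P_l,P)\to 0$. I would show by a subsequence argument that (a) every subsequence $\{P_{l_j}\}$ contains a further weakly convergent subsubsequence $P_{l_{j_m}}\to R$, and (b) any such weak limit $R$ must equal $P$. For (a), the key is tightness: on a Polish locally compact Hausdorff $\mathcal X$, a continuous characteristic kernel has its RKHS dense in $C_0(\mathcal X)$ (so-called $c_0$-universality), so $d_k$-convergence forces $\int f\, dP_l \to \int f\, dP$ for every $f\in C_0(\mathcal X)$. Using compactly supported bump functions and inner regularity of the Borel measure $P$, one exhausts $\mathcal X$ by compact sets $K_\varepsilon$ with $P(K_\varepsilon)\geq 1-\varepsilon$ and transfers this mass control to the $P_l$ via the $C_0$-convergence, yielding uniform tightness; Prokhorov's theorem then delivers the desired subsubsequence. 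For (b), apply the easy direction to conclude $d_k(P_{l_{j_m}},R)\to 0$; combined with $d_k(P_{l_{j_m}},P)\to 0$ and the triangle inequality in $\mathcal H_k$, we obtain $\mu_k(R)=\mu_k(P)$, and injectivity of $\mu_k$ (since $k$ is characteristic) forces $R=P$. Because every subsequence admits a subsubsequence weakly convergent to the same limit $P$, the full sequence $P_l\to P$ weakly.

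The main obstacle is extracting tightness from $d_k$-convergence, which is where both the local compactness and the characteristic property of $k$ must combine. The critical intermediate fact is that, under the stated hypotheses on $\mathcal X$ and $k$, the RKHS $\mathcal H_k$ is dense in $C_0(\mathcal X)$; without local compactness $C_0(\mathcal X)$ is too small to capture weak convergence by integration, and without the characteristic property the embedding fails to distinguish measures. Once $c_0$-universality is established (via a Stone--Weierstrass type argument on the uniform closure of $\mathcal H_k$, using that $k$ is characteristic to rule out a proper closed subalgebra), the tightness step reduces to routine measure-theoretic bookkeeping, and the rest of the argument proceeds as above.
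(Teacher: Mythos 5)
Your overall strategy — prove both directions, with the hard direction via a subsequence/tightness/Prokhorov argument and identification of the weak limit through the triangle inequality and injectivity of $\mu_k$ — is a sound and standard route, and the easy direction and step~(b) are correct as written. The gap is entirely in step~(a), where you claim that ``a continuous characteristic kernel has its RKHS dense in $C_0(\mathcal X)$'' and propose to prove this by a Stone--Weierstrass argument on $\mathcal H_k$. This cannot work as stated. First, $\mathcal H_k$ is a linear subspace of $C_0(\mathcal X)$ (when $\mathcal H_k\subset C_0$), but it is not a subalgebra — it is not closed under pointwise products — so Stone--Weierstrass does not apply to it directly. Second, and more fundamentally, the implication ``characteristic $\Rightarrow$ $c_0$-universal'' is false in general. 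Being $c_0$-universal is equivalent to $\mu_k$ being injective on the space of all finite signed regular Borel measures, which is strictly stronger than injectivity on the set of probability measures (i.e.\ being characteristic). There exist bounded continuous characteristic kernels that are not $c_0$-universal, and for such kernels your claim that $d_k$-convergence forces $\int f\,dP_l\to\int f\,dP$ for all $f\in C_0(\mathcal X)$ — on which the entire tightness argument rests — does not follow.

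This is not a cosmetic technicality: it is precisely the subtlety that a later erratum to \cite{SimSch16Kernel} (Simon-Gabriel, Barp, Sch\"olkopf, Mackey) identified, showing that the correct necessary and sufficient condition for $d_k$ to metrize weak convergence on a non-compact Polish locally compact Hausdorff space is integral strict positive definiteness over finite signed Borel measures (equivalently $c_0$-universality), not characteristicness alone. To make your proof rigorous you would need to strengthen your hypothesis to $c_0$-universality (or give an independent argument that the specific kernels used in the rest of the paper, e.g.\ bounded continuous translation-invariant characteristic kernels on $\mathbb R^d$, are $c_0$-universal — which does hold in the translation-invariant case). Once $c_0$-universality is in hand, your $C_0$-convergence argument, the bump-function tightness bound, Prokhorov, and the limit-identification step all close correctly.
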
	

We note that the weak metrizable property is also favored for training deep generative models \cite{ArjovskyB17, Arjovsky2017WGAN,Li2017mmdGAN}. An example of Polish, locally compact Hausdorff space is $\mathbb R^d$, and both Gaussian and Laplacian kernels defined on it are bounded continuous and characteristic \cite{Sriperumbudur2016EstPM}. 

We next introduce Sanov's theorem from large deviation theory, which, together with the weak metrizable property of the MMD, is critical to establish our main results in this section. Denote by $\hat Q_n$  the empirical measure of $x^n$, i.e., $\hat Q_n=\frac{1}{n}\sum_{i=1}^n\delta_{x_i}$ with $\delta_x$ being the Dirac measure at~$x$. 

\begin{theorem}[Sanov's Theorem {\cite{Sanov1958original,Dembo2009}}]
	\label{thm:Sanov}
	Let $x^n$ i.i.d.~$\sim Q\in\mathcal P$. For a set $\Gamma\subset\mathcal P$, it holds that
	\begin{align}
	\limsup_{n\to\infty}-\frac{1}{n}\log\mathbf P_{x^n}(\hat Q_n\in\Gamma)&\leq\inf_{R\in\operatorname{int}\Gamma}D(R\|Q),\nn\\
	\liminf_{n\to\infty}-\frac{1}{n}\log\mathbf P_{x^n}(\hat Q_n\in\Gamma)&\geq\inf_{R\in\operatorname{cl}\Gamma}D(R\|Q),\nn
	\end{align}
	where $\operatorname{int}\Gamma$ and $\operatorname{cl}\Gamma$ are the  interior and closure of $\Gamma$ w.r.t.~the weak topology on $\mathcal P$, respectively.
\end{theorem}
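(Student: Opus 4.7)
This is the classical Sanov theorem, which I would prove as a large deviation principle (LDP) for the empirical measure $\hat Q_n$ on $\mathcal P$ under the weak topology, with rate function $D(\cdot\|Q)$. The two displayed inequalities are simply the standard open-set and closed-set halves of an LDP rewritten using $-\tfrac{1}{n}\log$: the one involving $\operatorname{int}\Gamma$ is the open-set lower bound on $\mathbf P(\hat Q_n\in\Gamma)$, and the one involving $\operatorname{cl}\Gamma$ is the closed-set upper bound. I would handle them with complementary techniques.

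\textbf{Interior inequality.} The plan is a change-of-measure (tilting) argument. Fix any $R\in\operatorname{int}\Gamma$ with $D(R\|Q)<\infty$ (otherwise the bound is vacuous), and pick a basic weak neighborhood $U\subset\operatorname{int}\Gamma$ of $R$. Writing
\[
\mathbf P_{x^n\sim Q}(\hat Q_n\in U)=\mathbf E_{x^n\sim R}\!\left[\mathbf 1\{\hat Q_n\in U\}\prod_{i=1}^{n}\tfrac{dQ}{dR}(x_i)\right],
\]
taking $-\tfrac{1}{n}\log$, and applying Jensen's inequality reduces everything to two standard facts: $R^{\otimes n}(\hat Q_n\in U)\to 1$ by the weak convergence of empirical measures under $R$, and $\tfrac{1}{n}\sum_i\log\tfrac{dR}{dQ}(x_i)\to D(R\|Q)$ in $R^{\otimes n}$-probability by the strong law. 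This yields $\limsup_n -\tfrac{1}{n}\log\mathbf P(\hat Q_n\in U)\le D(R\|Q)$, and infimizing over $R\in\operatorname{int}\Gamma$ closes this half.

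\textbf{Closure inequality and main obstacle.} Here I would invoke the Donsker--Varadhan variational formula $D(R\|Q)=\sup_{\phi\in C_b(\mathcal X)}\{\int\phi\,dR-\log\int e^{\phi}\,dQ\}$. For each bounded continuous $\phi$, an exponential Markov bound on $\prod_i e^{\phi(x_i)}$ gives
\[
\mathbf P(\hat Q_n\in\operatorname{cl}\Gamma)\le \exp\!\Big(-n\inf_{R\in\operatorname{cl}\Gamma}\{\textstyle\int\phi\,dR-\log\int e^{\phi}\,dQ\}\Big).
\]
Taking $-\tfrac{1}{n}\log$, letting $n\to\infty$, and optimizing over $\phi$ yields a $\sup_\phi\inf_R$ quantity; the main obstacle is to interchange the two so that the inner supremum restores $D(R\|Q)$. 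The clean route is to first establish exponential tightness of $\{\hat Q_n\}$: since $\mathcal X$ is Polish, Prokhorov's criterion combined with an exponential Chebyshev bound yields, for every $L>0$, a compact $K_L\subset\mathcal X$ with $\mathbf P(\hat Q_n(K_L^c)>\varepsilon)\le e^{-nL}$, so one may restrict attention to weakly precompact subsets of $\mathcal P$, on which a minimax theorem (for instance Ky Fan's) justifies the exchange. An alternative is the Dawson--Gärtner projective-limit theorem, which lifts the finite-alphabet Sanov theorem (proved by the method of types) through finite measurable partitions of $\mathcal X$; Polishness supplies the separating family of partitions required. It is essentially in this tightness/exchange step that the Polish hypothesis is genuinely used.
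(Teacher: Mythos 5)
The paper does not prove Theorem~\ref{thm:Sanov}; it is imported as a known result from \cite{Sanov1958original,Dembo2009}, so there is no in-paper proof to check your sketch against. That said, your outline is the textbook argument and is essentially sound. Two small points on the interior half: the change-of-measure identity you write as an equality should be an inequality $\ge$, since $D(R\|Q)<\infty$ only gives $R\ll Q$, not $Q\ll R$, so the tilted expectation omits $Q$-mass off the support of $R$ (harmless, since you only need a lower bound on the probability); and the Jensen step needs some uniform-integrability bookkeeping to pass the limit through a conditional expectation of a product --- the cleaner classical route is to truncate to the $R^{\otimes n}$-high-probability event on which $\hat Q_n\in U$ and $\tfrac{1}{n}\sum_i\log\tfrac{dR}{dQ}(x_i)\le D(R\|Q)+\epsilon$, where the likelihood-ratio factor is at least $e^{-n(D(R\|Q)+\epsilon)}$. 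On the closure half you correctly flag the sup-inf interchange as the crux, and both remedies you propose are legitimate. It is worth observing that when the paper proves its \emph{extended} two-sample Sanov theorem (Appendix~\ref{sec:extendedSanov}), it follows the second route you mention as an alternative --- a Csisz\'ar-style method-of-types argument on finite alphabets, lifted to Polish $\mathcal X$ through finite measurable partitions and the $\tau$-topology via the partition-supremum characterization of $D(\cdot\|\cdot)$ --- rather than the Donsker--Varadhan plus exponential-tightness plus minimax route you lead with. Both are standard; the partition route generalizes more mechanically to the two-sample product setting the paper needs, which is presumably why it was chosen there.
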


Sanov's theorem states that if the underlying distribution $Q$ is not in $\operatorname{cl}\Gamma$, the closure of a set $\Gamma$ of distributions, then the probability of its empirical distribution $\hat Q_n$ lying in $\operatorname{cl}\Gamma$ goes to $0$ at least exponentially fast. This enables us to directly investigate type-II error exponent through the empirical distribution and the acceptance region, rather than through the limiting performance of the test statistics. Moreover, the lower bound on the error exponent would establish the universal optimality if it is no lower than $D(P\|Q)$ for a goodness-of-fit test.

We now state the two classes of MMD based goodness-of-fit tests that are universally optimal.
\subsection{Simple Kernel Tests}
\label{sec:simple}

The first test directly computes the MMD between the target distribution $P$ and the empirical distribution of sample $x^n$. Though having been investigated in  \cite{Altun2006,BalaMinimaxOptGOF,Szabo2015Two, Szabo2016learning}, its optimality for the universal hypothesis testing problem remains unknown.

Let also $\hat Q_n$ be the empirical measure of $x^n$. We have a simple kernel test with acceptance region
\[\Omega_0(n)= \left\{x^n:d_k(P, \hat Q_n)\leq\gamma_{n}\right\},\]
where $\gamma_n$ represents a threshold and  $d_k^2(P,\hat Q_n)$ equals \[\frac{1}{n^2}\sum_{i=1}^n\sum_{j=1}^nk(x_i,x_j)+\mathbf{E}_{yy'}k(y,y')
-\frac{2}{n}\sum_{i=1}^n\mathbf E_yk(x_i,y),\]
with $y,y'~\text{i.i.d.}\sim P$. On the one hand, we want the threshold $\gamma_n$ to be small so that the test type-II error probability is low; on the other hand, the threshold cannot be too small in order to meet the level constraint on the type-I error probability. The balance between the two error probabilities is attained with a  threshold that diminishes at an appropriate rate. 
\begin{theorem}
	\label{thm:simple1}
	Let $\mathcal X$ be Polish, locally compact Hausdorff. For $P\in\mathcal P$ and $x^n$ i.i.d.~$\sim Q\in\mathcal P$, assume $0<D(P\|Q)<\infty$ under the alternative hypothesis $H_1$. Further assume that kernel $k$ is bounded continuous and characteristic, with $0\leq k(\cdot,\cdot)\leq K$ for some $K>0$. For a given~$\alpha$, $0<\alpha<1$, set $\gamma_n=\sqrt{2K/n}\left(1+\sqrt{-\log\alpha}\right).$
	Then the simple kernel test $d_k(P,\hat Q_n)\leq \gamma_n$ is an optimal level $\alpha$ test for the universal hypothesis testing problem, that is, 
	\begin{enumerate}
		\item under $H_0:P=Q$,~$\begin{aligned}
		\mathbf P_{x^n}\left(d_k(P,\hat Q_n)>\gamma_n\right)\leq\alpha,
		\end{aligned}$
		\item under $H_1: P\neq Q$,\[\begin{aligned}\liminf_{n\to\infty}-\frac1n \log\mathbf P_{x^n} \left(d_k(P,\hat Q_n)\leq\gamma_n\right)=D(P\|Q).
		\end{aligned}\]
	\end{enumerate}
\end{theorem}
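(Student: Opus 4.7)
The plan is to split the claim into three pieces: (i) the level-$\alpha$ constraint on the type-I probability with the prescribed $\gamma_n$; (ii) the lower bound $\liminf_{n\to\infty}-\tfrac{1}{n}\log\mathbf P_{x^n}(d_k(P,\hat Q_n)\le\gamma_n)\ge D(P\|Q)$; and (iii) the matching upper bound, which is free: since the test is level $\alpha$, the Chernoff-Stein lemma (Lemma~\ref{lem:SteinLemma}) immediately forbids any type-II exponent larger than $D(P\|Q)$. All the substantive work is in (i) and (ii).

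For (i), I would use a standard concentration argument. Start from the classical MMD variance identity $\mathbf E_{x^n\sim P}[d_k(P,\hat Q_n)^2]=\tfrac{1}{n}(\mathbf E_{x\sim P}k(x,x)-\mathbf E_{xx'\sim P}k(x,x'))\le K/n$, so $\mathbf E[d_k(P,\hat Q_n)]\le\sqrt{K/n}$ by Jensen. The map $x^n\mapsto d_k(P,\hat Q_n)$ has bounded differences of size $2\sqrt{K}/n$, since swapping one coordinate perturbs $\mu_k(\hat Q_n)$ by at most $2\sqrt{K}/n$ in RKHS norm. McDiarmid's inequality therefore yields $\mathbf P(d_k(P,\hat Q_n)>\mathbf E[d_k(P,\hat Q_n)]+t)\le\exp(-nt^2/(2K))$. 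Setting $t=\sqrt{-2K\log\alpha/n}$ and upper-bounding the mean by $\sqrt{2K/n}$ recovers $\gamma_n=\sqrt{2K/n}(1+\sqrt{-\log\alpha})$ and delivers the type-I bound.

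For (ii), rewrite the acceptance region as $\{x^n:\hat Q_n\in\Gamma_n\}$ with $\Gamma_n=\{R\in\mathcal P:d_k(P,R)\le\gamma_n\}$. Since $\gamma_n\downarrow 0$, for every fixed $\epsilon>0$ one has $\Gamma_n\subset\Gamma_\epsilon:=\{R:d_k(P,R)\le\epsilon\}$ for all $n$ large enough. By Theorem~\ref{thm:MMDmetrize}, $d_k$ metrizes the weak topology on $\mathcal P$, so the closed $d_k$-ball $\Gamma_\epsilon$ is also weakly closed, and Sanov's theorem (Theorem~\ref{thm:Sanov}) gives
\[\liminf_{n\to\infty}-\tfrac{1}{n}\log\mathbf P_{x^n}(\hat Q_n\in\Gamma_n)\ge\inf_{R\in\Gamma_\epsilon}D(R\|Q).\]
It remains to show $\inf_{R\in\Gamma_\epsilon}D(R\|Q)\to D(P\|Q)$ as $\epsilon\to 0$. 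The infimum is trivially at most $D(P\|Q)$ because $P\in\Gamma_\epsilon$; for the reverse direction, any sequence $R_n\in\Gamma_{\epsilon_n}$ with $\epsilon_n\to 0$ satisfies $d_k(P,R_n)\to 0$, so by Theorem~\ref{thm:MMDmetrize} it converges weakly to $P$, and the weak lower semi-continuity of $D(\cdot\|Q)$ forces $\liminf_n D(R_n\|Q)\ge D(P\|Q)$. Combining with (iii) yields the desired equality.

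The principal obstacle is that the acceptance region shrinks with $n$, so Sanov's theorem cannot be applied directly to any single fixed set. The sandwich $\Gamma_n\subset\Gamma_\epsilon$ followed by the limit $\epsilon\to 0$ is exactly what converts this into a clean statement, and the two ingredients that make the limit behave are precisely the weak metrizability of $d_k$ (so the balls $\Gamma_\epsilon$ are weakly closed and collapse to $\{P\}$) and the weak lower semi-continuity of KLD. Everything outside that reduces to a standard McDiarmid-type concentration and a direct appeal to Chernoff-Stein.
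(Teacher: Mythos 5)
Your proposal is correct and follows essentially the same route as the paper's proof: a concentration bound on $d_k(P,\hat Q_n)$ under $H_0$ for the level constraint, then Sanov's theorem combined with the weak-metrizability of $d_k$ (Theorem~\ref{thm:MMDmetrize}) and the weak lower semi-continuity of $D(\cdot\|Q)$ for the type-II exponent lower bound, with Chernoff--Stein supplying the matching upper bound. The only cosmetic difference is that you re-derive the type-I concentration from scratch via McDiarmid's inequality, whereas the paper cites precisely that bound as Lemma~\ref{lem:gamman}; your argument that $\inf_{R\in\Gamma_\epsilon}D(R\|Q)\to D(P\|Q)$ via a contradiction with a weakly convergent sequence is an equivalent phrasing of the paper's direct appeal to lower semi-continuity.
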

\begin{proof}
	That $d_k(P,\hat Q_n)\leq \gamma_n$ has level $\alpha$ can be directly verified by \cite[Eq.~(24)]{Szabo2016learning} (see Lemma~\ref{lem:gamman} in Appendix~\ref{sec:proofA}). Let $\beta=\liminf_{n\to\infty}-\frac1n\log\mathbf P_{x^n}(d_k(P,\hat Q_n)\leq\gamma_n)$ under $H_1$. According to Chernoff-Stein lemma, we only need to show $\beta\geq D(P\|Q)$.
	
	To apply Sanov's theorem, we notice that deciding if $x^n\in\{x^n:d_k(P,\hat Q_n)\leq \gamma_n\}$ is equivalent to deciding if its empirical measure $\hat Q_n\in \{P':d_k(P,P')\leq \gamma_n\}$. Since $\gamma_n\to0$ as $n\to\infty$, for any constant $\gamma>0$, there exists an integer $n_0$ such that $\gamma_n<\gamma$ for all $n>n_0$. Hence, $\{P':d_k(P,P')\leq \gamma_n\}\subset\{P':d_k(P,P')\leq\gamma\}$ for large enough $n$. It follows that for any $\gamma>0$,
	\begin{align}
	\label{eqn:common1}
	\beta &\geq\liminf_{n\to\infty}-\frac{1}{n}\log \mathbf P_{x^n}\left(d_k(P,\hat Q_n)\leq\gamma \right)
	\nn\\&\geq\inf_{\{P'\in\mathcal P:d_k(P,P')\leq\gamma\}}D(P'\|Q),
	\end{align}
	where the last inequality is from Sanov's theorem and that $\{P'\in\mathcal P:d_k(P,P')\leq\gamma\}$ is closed w.r.t.~the weak topology (cf.~Theorem~\ref{thm:MMDmetrize}). Then for any given $\epsilon>0$, there exists some $\gamma>0$ such that $\inf_{\{P'\in\mathcal P:d_k(P,P')\leq\gamma\}} D(P'\|Q)\geq D(P\|Q)-\epsilon$, using the lower semi-continuity of the KLD \cite{VanErven2014RenyiKLD} (Lemma~\ref{lem:lowerSemiContiKLD} in Appendix~\ref{sec:proofA}) and the assumption that $0<D(P\|Q)<\infty$ under $H_1$. This further implies $\beta\geq D(P\|Q).$
\end{proof}
It is worth noting that we simply select one threshold $\gamma_n$ in the above theorem. Indeed, any vanishing threshold $\gamma_n'>0$ with $\gamma_{n}'\geq\gamma_n$ leads to the same optimality w.r.t.~the type-II error exponent, an asymptotic statistical criterion. A larger threshold, however, may result in a higher type-II error probability in the finite sample regime. A further discussion on the threshold choice will be given in Section~\ref{sec:remark}. 

The test statistic $d_k^2(P,\hat Q_n)$ is a biased estimator of $d_k^2(P,Q)$. By replacing $\frac{1}{n^2}\sum_{i=1}^n\sum_{j=1}^n k(x_i,x_j)$ with $\frac{1}{n(n-1)}\sum_{i=1}^n\sum_{j\neq i}k(x_i,x_j)$, we obtain an unbiased statistic denoted as $d_u^2(P,\hat Q_n)$. We comment that $d_u^2(P, \hat Q_n)$ is not a squared quantity and can be negative, due to the unbiasedness. The following result shows that $d_u^2(P,\hat Q)$ can also be used to construct a universally optimal test.

\begin{corollary}
	\label{cor:simple2}
	Under the same conditions of Theorem~\ref{thm:simple1}, the test $d_u^2(P,\hat Q_n)\leq\gamma_n^2+K/n$ is a level~$\alpha$ asymptotically optimal test for universal hypothesis testing.
\end{corollary}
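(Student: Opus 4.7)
The plan is to reduce everything to Theorem~\ref{thm:simple1} via a simple sandwich bound between the biased and unbiased statistics. First I will expand the difference and observe that
\[ d_k^2(P,\hat Q_n)-d_u^2(P,\hat Q_n)=\frac{1}{n^2}\sum_{i=1}^n k(x_i,x_i)-\frac{1}{n^2(n-1)}\sum_{i\neq j} k(x_i,x_j); \]
since $0\leq k\leq K$, each of these two non-negative terms lies in $[0,K/n]$, so $|d_k^2(P,\hat Q_n)-d_u^2(P,\hat Q_n)|\leq K/n$. This pointwise inequality is the only ingredient specific to the unbiased statistic; the rest of the argument is a reuse of Theorem~\ref{thm:simple1}.

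With this bound in hand, the level condition is immediate. From $d_u^2\leq d_k^2+K/n$ I get $\{d_k(P,\hat Q_n)\leq\gamma_n\}\subseteq\{d_u^2(P,\hat Q_n)\leq\gamma_n^2+K/n\}$, so the rejection region of the new test is contained in the rejection region of the biased test, which already has probability at most $\alpha$ under $P$ by Theorem~\ref{thm:simple1}.

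For the type-II exponent I will use the opposite direction $d_u^2\geq d_k^2-K/n$, which yields $\{d_u^2(P,\hat Q_n)\leq\gamma_n^2+K/n\}\subseteq\{d_k(P,\hat Q_n)\leq\tilde\gamma_n\}$ with $\tilde\gamma_n:=\sqrt{\gamma_n^2+2K/n}=O(n^{-1/2})$. Because $\tilde\gamma_n\to 0$, the proof of Theorem~\ref{thm:simple1} applies with $\tilde\gamma_n$ in place of $\gamma_n$: for every fixed $\gamma>0$ the acceptance region is eventually contained in the weakly closed set $\{P'\in\mathcal P: d_k(P,P')\leq\gamma\}$, Sanov's theorem then lower-bounds the error exponent by $\inf_{d_k(P,P')\leq\gamma} D(P'\|Q)$, and lower semi-continuity of $D(\cdot\|Q)$ (together with $0<D(P\|Q)<\infty$) drives this infimum to $D(P\|Q)$ as $\gamma\downarrow 0$. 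The matching upper bound is Chernoff--Stein.

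The only mildly delicate step is the algebraic bound $|d_k^2-d_u^2|\leq K/n$, where one must carefully separate diagonal from off-diagonal kernel sums and use that $k$ is bounded; in particular, the correction term $K/n$ in the threshold is exactly what absorbs this discrepancy. Everything else is a transfer of the argument from Theorem~\ref{thm:simple1}, so I do not expect any serious obstacle beyond this bookkeeping.
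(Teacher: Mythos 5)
Your proof is correct and follows essentially the same route as the paper's: the sandwich $|d_k^2(P,\hat Q_n)-d_u^2(P,\hat Q_n)|\le K/n$ yields the inclusions $\{d_k^2\le\gamma_n^2\}\subset\{d_u^2\le\gamma_n^2+K/n\}\subset\{d_k^2\le\gamma_n^2+2K/n\}$, with the first giving the level constraint and the second (since $\gamma_n^2+2K/n\to 0$) giving the error exponent via the argument of Theorem~\ref{thm:simple1}. No gaps.
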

\begin{proof}[Proof (sketch)]
	As $0\leq k(\cdot,\cdot)\leq K$, we get $\{x^n:d_k^2(P,\hat Q_n)\leq\gamma_n^2\}\subset\{x^n:d_u^2(P,\hat Q_n)\leq\gamma_n^2+K/n\}\subset\{x^n:d_k^2(P,\hat Q_n)\leq\gamma_n^2+2K/n\}$.  The level constraint and the type-II error exponent can then be verified using the subset and superset, respectively. See Appendix~\ref{sec:proofA} for details.
\end{proof}

The tests in this section still require closed-form integrals, namely, $\mathbf E_yk(x_i,y)$ and $\mathbf E_{yy'}k(y,y')$. Our purpose here is to show that the universally optimal type-II error exponent is indeed achievable, giving a meaningful optimality criterion for goodness-of-fit tests. In the next section, we consider another class of MMD based tests without the need of closed-form integrals.

\subsection{Kernel Two-Sample Tests}
\label{sec:twosamplemainresult}
In the context of model criticism, \citet{Lloyd2015ModelCrit} cast goodness of fit testing into a two-sample problem, where one draws sample $y^m$ from distribution $P$ and then decide if $y^m$ and $x^n$ are from the same distribution. A question that arises is the choice of number of samples, which is not obvious due to the lack of an explicit criterion. In light of universal hypothesis testing, we could ask how many samples would suffice for a two-sample test to attain the error exponent $D(P\|Q)$.

Denote by $\hat P_m$ the empirical measure of $y^m$. Notice that the type-I and type-II error probabilities of a two-sample test depend on both $P$ and $Q$. We consider the following two-sample test with acceptance region \[\Omega_0(m,n)=\{(y^m,x^n):d_k(\hat P_m,\hat Q_n)\leq\gamma_{m,n}\},\]
where $K$ is a finite bound on $k(\cdot,\cdot)$,
\begin{align} \gamma_{m,n}=\left(\sqrt{{K}/{m}}+\sqrt{{K}/{n}}\right)\left(2+\sqrt{-2\log(\alpha/2)}\right),\nn
\end{align}
\begin{align}
d_k^2(\hat P_m,\hat Q_n)=&\,\sum_{i=1}^n\sum_{j=1}^nk(x_i,x_j)+\sum_{i=1}^m\sum_{j=1}^mk(y_i,y_j)\nn\\
&\,-\frac{2}{mn}\sum_{i=1}^n\sum_{j=1}^mk(x_i,y_j).\nn
\end{align}

The statistic $d_k^2(\hat P_m, \hat Q_n)$ for estimating the  squared MMD was originally proposed in \cite{Gretton2012}. Although additional randomness is introduced due to the use of $\hat P_m$, it does not hurt the type-II error exponent provided that $m$ is large enough, as stated below.
\begin{theorem} 
	\label{thm:2sample1}
	Assume the same conditions as in Theorem~\ref{thm:simple1}, and that $y^m$ i.i.d.~$\sim P$ and $x^n$ i.i.d.~$\sim Q$.  Let $\Omega_1(m,n)=\mathcal X^{m+n}\setminus\Omega_0(m,n)$ be the rejection region. If $m$ is such that $m/n\to\infty$ as $n\to\infty$, then we have
	\begin{enumerate}	
		\item under $H_0: P=Q$, $\begin{aligned}\mathbf P_{y^mx^n}\left(\Omega_1(m,n)\right)\leq\alpha\end{aligned}$,
		\item under $H_1: P\neq Q$, 
		\begin{align}
		\label{eqn:diff1}
		\liminf_{n\to\infty}-\frac{1}{n}\log\mathbf P_{y^mx^n}\left(\Omega_0(m,n)\right)=D(P\|Q).\end{align}		
	\end{enumerate}
\end{theorem}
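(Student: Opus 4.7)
My plan is to mirror the proof of Theorem~\ref{thm:simple1} with two new ingredients: a triangle-inequality reduction that passes from the two-sample MMD $d_k(\hat P_m,\hat Q_n)$ to the one-sample MMD $d_k(P,\hat Q_n)$, and a concentration bound for $d_k(\hat P_m,P)$ whose tail can be made asymptotically negligible relative to $e^{-nD(P\|Q)}$ precisely because $m/n\to\infty$.

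For the level statement I would apply the one-sample MMD concentration bound (Lemma~\ref{lem:gamman}, as invoked in the proof of Theorem~\ref{thm:simple1}) twice, once to $y^m\sim P$ and once to $x^n\sim Q$, each at confidence $\alpha/2$. The constants in $\gamma_{m,n}$ are calibrated so that
\[d_k(\hat P_m,P)\leq \sqrt{K/m}\bigl(2+\sqrt{-2\log(\alpha/2)}\bigr)\]
and the analogous bound for $d_k(\hat Q_n,Q)$ each hold with probability at least $1-\alpha/2$. A union bound combined with the triangle inequality $d_k(\hat P_m,\hat Q_n)\leq d_k(\hat P_m,P)+d_k(\hat Q_n,Q)$, which is valid under $H_0$ since $P=Q$, then yields $\mathbf P(d_k(\hat P_m,\hat Q_n)>\gamma_{m,n})\leq \alpha$.

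For the exponent, Chernoff-Stein reduces the task to the lower bound $\liminf\geq D(P\|Q)$. I would introduce an auxiliary threshold $\eta_m\to 0$ chosen to decay slowly enough that $m\eta_m^2/n\to\infty$, which is realizable exactly when $m/n\to\infty$. Using the triangle inequality $d_k(P,\hat Q_n)\leq d_k(P,\hat P_m)+d_k(\hat P_m,\hat Q_n)$ I get the inclusion
\[\bigl\{d_k(\hat P_m,\hat Q_n)\leq\gamma_{m,n}\bigr\}\subseteq \bigl\{d_k(\hat P_m,P)>\eta_m\bigr\}\cup\bigl\{d_k(P,\hat Q_n)\leq\gamma_{m,n}+\eta_m\bigr\}.\]
McDiarmid's inequality (the same tool that underlies Lemma~\ref{lem:gamman}) bounds the first term by $\exp(-c\,m\eta_m^2)$ for some $c>0$, and the calibration of $\eta_m$ forces this probability to be $o(e^{-nD(P\|Q)})$, hence invisible at the exponential scale. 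The second term depends only on $x^n$ and involves a vanishing MMD ball of radius $\gamma_{m,n}+\eta_m\to 0$ around $P$; here the argument from Theorem~\ref{thm:simple1} applies verbatim, combining Sanov's theorem with the weak metrizability of $d_k$ (Theorem~\ref{thm:MMDmetrize}) and the lower semi-continuity of the KLD to produce a lower bound of $D(P\|Q)-\epsilon$ for every $\epsilon>0$.

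The main obstacle I anticipate is the balancing of $\eta_m$: it must simultaneously vanish, so that the Sanov infimum converges up to $D(P\|Q)$, and decay slowly, so that $m\eta_m^2$ dominates $n$ and suppresses the $\hat P_m$-side fluctuation below the target exponential rate. Both requirements are compatible if and only if $m\eta_m^2/n$ can be driven to infinity, which is exactly where the hypothesis $m/n\to\infty$ is used in an essential way; under weaker scalings the $\hat P_m$-fluctuation would contribute nontrivially to the exponent. Once this calibration is fixed, everything else reduces to bookkeeping: the level calculation is a direct union bound, and the Sanov step can be imported wholesale from the proof of Theorem~\ref{thm:simple1}.
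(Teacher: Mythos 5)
Your decomposition
\[\bigl\{d_k(\hat P_m,\hat Q_n)\leq\gamma_{m,n}\bigr\}\subseteq \bigl\{d_k(\hat P_m,P)>\eta_m\bigr\}\cup\bigl\{d_k(P,\hat Q_n)\leq\gamma_{m,n}+\eta_m\bigr\}\]
and the calibration of $\eta_m$ are exactly the engine of the paper's proof, which picks the concrete choice $\gamma_{m,n}'=\sqrt{2K/m}+\sqrt{2KnD(P\|Q)/m}$ so that, via Lemma~\ref{lem:gamman}, the $\hat P_m$-fluctuation term decays at rate exactly $e^{-nD(P\|Q)}$; your more flexible requirement $m\eta_m^2/n\to\infty$ achieves the same end (and correctly identifies where $m/n\to\infty$ is used in an essential way). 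Your level argument---triangle inequality plus two invocations of Lemma~\ref{lem:gamman} at confidence $\alpha/2$---is a perfectly valid substitute for the paper's direct appeal to the two-sample deviation bound of \cite[Theorem~7]{Gretton2012} (Lemma~\ref{lem:gammanm}); both are McDiarmid-type bounds with matching constants after routine algebra.

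The one place your sketch is genuinely short is the matching upper bound $\liminf\leq D(P\|Q)$, which you dispose of with ``Chernoff--Stein reduces the task to the lower bound.'' The Chernoff--Stein lemma as stated (Lemma~\ref{lem:SteinLemma}) applies to tests on a single i.i.d.\ sequence, while $\Omega_0(m,n)$ is a function of both $y^m$ and $x^n$; the reduction is not automatic. The paper closes this with a second triangle-inequality step: picking $\gamma_m'$ and $\gamma_n'$ so that $\gamma_m'+\gamma_n'<\gamma_{m,n}$, one has
\[\mathbf P_{y^mx^n}\bigl(d_k(\hat P_m,\hat Q_n)\leq\gamma_{m,n}\bigr)\geq P\bigl(d_k(\hat P_m,P)\leq\gamma_m'\bigr)\,Q\bigl(d_k(P,\hat Q_n)\leq\gamma_n'\bigr),\]
where the $P$-factor exceeds $1-\alpha$ (so contributes zero to the exponent) and the $Q$-factor is the acceptance probability of the level-$\alpha$ one-sample test $d_k(P,\hat Q_n)\leq\gamma_n'$, to which Lemma~\ref{lem:SteinLemma} now applies, giving $\liminf_{n}-\frac1n\log Q(\cdots)\leq D(P\|Q)$. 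You should make this extra step explicit rather than folding it into a one-line citation.
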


 The level $\alpha$ constraint can be verified by \cite[Theorem~7]{Gretton2012}. We decompose the type-II error probability into two components and show that each decays at least exponentially at a rate of $D(P\|Q)$. A complete proof is provided in Appendix~\ref{sec:proofB}.

We may also replace the first two terms in $d_k^2(\hat P_m,\hat Q_n)$ with $\frac{1}{n(n-1)}\sum_{i=1}^n\sum_{j\neq i} k(x_i,x_j)$ and $\frac{1}{m(m-1)}\sum_{i=1}^m\sum_{j\neq i} k(y_i,y_j)$, which results in an unbiased statistic denoted as $d_u^2(\hat P_m, \hat Q_n)$ \cite{Gretton2012}. The following corollary can be shown in a similar manner to Corollary~\ref{cor:simple2} by noting that $|d_u^2(\hat P_m,\hat Q_n)-d_k^2(\hat P_m,\hat Q_n)|\leq K/m+K/n$; details are omitted.
\begin{corollary}
	\label{cor:2sample2}
	Under the same assumptions of Theorem~\ref{thm:2sample1}, the test $d_u^2(\hat P_m,\hat Q_n)\leq\gamma_{m,n}^2+K/m+K/n$ has its type-I error probability below $\alpha$ and type-II error exponent being $D(P\|Q)$, when ${m}/{n}\to\infty$ as $n\to\infty$.
\end{corollary}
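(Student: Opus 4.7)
The plan is to mirror the proof of Corollary~\ref{cor:simple2}, using the given perturbation bound $|d_u^2(\hat P_m,\hat Q_n)-d_k^2(\hat P_m,\hat Q_n)|\leq K/m+K/n$. This immediately sandwiches the unbiased acceptance region between two biased ones:
\[\{d_k^2\leq\gamma_{m,n}^2\}\,\subseteq\,\{d_u^2\leq\gamma_{m,n}^2+K/m+K/n\}\,\subseteq\,\{d_k^2\leq\gamma_{m,n}^2+2K/m+2K/n\}.\]
Each direction handles one of the two claims in the corollary.

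For the level constraint under $H_0:P=Q$, I would take complements in the left inclusion: the rejection region of the unbiased test is contained in the rejection region of the biased test of Theorem~\ref{thm:2sample1}. Since that biased test has type-I probability at most $\alpha$, so does the unbiased test.

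For the type-II exponent, I would use the right inclusion. Define the inflated threshold $\tilde\gamma_{m,n}:=(\gamma_{m,n}^2+2K/m+2K/n)^{1/2}$; since $\gamma_{m,n}\to 0$ and the additive correction vanishes (using $m/n\to\infty$ to control the $K/m$ term), we still have $\tilde\gamma_{m,n}\to 0$. The right inclusion yields
\[-\tfrac{1}{n}\log\mathbf P_{y^m x^n}\bigl(d_u^2\leq\gamma_{m,n}^2+K/m+K/n\bigr)\;\geq\;-\tfrac{1}{n}\log\mathbf P_{y^m x^n}\bigl(d_k^2\leq\tilde\gamma_{m,n}^2\bigr),\]
and replaying the type-II argument of Theorem~\ref{thm:2sample1} with $\tilde\gamma_{m,n}$ in place of $\gamma_{m,n}$ shows the right-hand $\liminf$ is at least $D(P\|Q)$. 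Chernoff--Stein (Lemma~\ref{lem:SteinLemma}) caps the exponent at $D(P\|Q)$, giving equality.

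The only real obstacle is verifying that the exponent-lower-bound portion of the proof of Theorem~\ref{thm:2sample1} is insensitive to the exact form of the threshold, requiring only that it vanishes and that $m/n\to\infty$. This is indeed the case: the threshold enters only through Sanov's theorem applied to a $\gamma$-ball $\{P':d_k(P,P')\leq\gamma\}$, which is weak-topology closed by Theorem~\ref{thm:MMDmetrize}, so any vanishing $\tilde\gamma_{m,n}$ works identically; and the decomposition of the joint probability involving $\hat P_m$, where $m/n\to\infty$ is used, is untouched by the threshold inflation.
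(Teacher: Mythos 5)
Your argument is correct and matches the paper's intended route: the paper simply states that the corollary follows "in a similar manner to Corollary~\ref{cor:simple2}" via the bound $|d_u^2(\hat P_m,\hat Q_n)-d_k^2(\hat P_m,\hat Q_n)|\leq K/m+K/n$, and your sandwich of acceptance regions together with the observation that the type-II argument in Theorem~\ref{thm:2sample1} only needs a vanishing threshold is exactly the intended fill-in. One tiny polish: for the upper bound on the exponent, rather than invoking Chernoff--Stein directly on the two-sample acceptance region (which lives in $\mathcal X^{m+n}$, not $\mathcal X^n$), it is cleaner to use the left inclusion $\{d_k^2\leq\gamma_{m,n}^2\}\subseteq\{d_u^2\leq\gamma_{m,n}^2+K/m+K/n\}$ together with the equality already established in Theorem~\ref{thm:2sample1}, or to replay the product-bound argument from that theorem's proof.
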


\subsection{Remarks}
\label{sec:remark}
{\bf Threshold Choice.} The distribution-free thresholds used in the MMD based tests are generally too conservative, as the actual distribution $P$ is not taken into account. Alternatively, we may use Monte Carlo or bootstrap methods to empirically estimate the acceptance threshold \cite{Chwialkowski2016Goodness,Gretton2012,Jitkrittum2017linearGoodness}, making the tests asymptotically level $\alpha$. These methods, however, introduce additional randomness on the threshold choice and further on the type-II error probability. As a result, it becomes difficult to characterize the type-II error exponent. A simple fix is to take the minimum of the Monte Carlo or bootstrap threshold and the distribution-free one, guaranteeing a vanishing threshold and hence the optimal type-II error exponent.  In our experiments, the bootstrap threshold is always smaller than the distribution-free threshold.

{\bf Finite vs.~Asymptotic Regimes.} A finitely positive error exponent $D(P\|Q)$ implies that the error probability decays with $\mathcal O\left(2^{-n(D(P\|Q)-\epsilon)}\right)$ where $\epsilon\in(0,D(P\|Q))$ can be arbitrarily small. It further implies that kernels affect only the sub-exponential term in the type-II error probability, as long as they are bounded continuous and characteristic. When $n$ is small, the sub-exponential term may dominate and the test performance does depend  on the specific kernel. Selecting a proper kernel is an ongoing research topic and we refer the reader to related works such as \cite{Jitkrittum2017linearGoodness,Gretton2012OptKernelLarge,Sutherland2017GeneandCrit}.

{\bf Non-i.i.d.~Sample.} We notice that \citet{Chwialkowski2016Goodness} considered non-i.i.d. sample by use of wild bootstrap. In general, statistical optimality with non-i.i.d.~sample is difficult to establish even for simple hypothesis testing.

{\bf General Two-Sample Problem.} Studied in Section~\ref{sec:twosamplemainresult} can be seen as a special case of the two-sample problem where sample sizes scale in different orders, i.e., $m/n\to\infty$ as $n\to\infty$. A direct extension is to consider the more common setting where $0<\lim_{n\to\infty}m/n<\infty$. For example, an equal number of real and fake samples is typically used for training generative models where the MMD acts as a critic to distinguish between them \cite{Li2015Generative,Dziugaite2015TrainingGenerative,Li2017mmdGAN}. However, the current approach is not readily applicable, for lacking an extended version of Sanov's theorem that works with two sample sequences. A naive way may try decomposing the acceptance region $\Omega_0(m,n)$ into $\Omega_0'(m)\times\Omega_0''(n)$ with $\Omega_0'(m)$ and $\Omega_0''(n)$ being respectively decided by $y^m$ and $x^n$, and then apply Sanov's theorem to each set. Unfortunately, such a  decomposition is not possible for the MMD based two-sample tests. We postpone a further investigation until Section~\ref{sec:two}, after studying the KSD based goodness-of-fit tests in the next section.


\section{Kernel Stein Discrepancy Based Goodness-of-Fit Tests}
\label{sec:KSD}
In this section, we investigate the KSD based goodness-of-fit tests recently proposed in \cite{Chwialkowski2016Goodness,Jitkrittum2017linearGoodness,Liu2016GoodnessFit}. 

Let $\mathcal X=\mathbb R^d$. Denote by $p$ and $q$ the density functions (w.r.t.~Lebesgue measure) of $P$ and $Q$, respectively. In \cite{Chwialkowski2016Goodness, Liu2016GoodnessFit}, the KSD is defined as
\begin{align}
d_S(P,Q)=\max_{\|f\|_{\mathcal H_k}\leq 1} \mathbf E_{x\sim Q}\left[s_p(x)f(x)+\nabla_xf(x)\right],\nn
\end{align}
where $\|f\|_{\mathcal H_k}\leq 1$ denotes the unit ball in the RKHS $\mathcal H_k$, and $s_p(x)=\nabla_x\log p(x)$ is the score function of $p(x)$. An equivalent expression of the KSD is given by \[d_S^2(P,Q)=\mathbf E_{x\sim Q}\mathbf E_{x'\sim Q}\, h_p(x,x'),\]
where $h_p(x,y)=s_p^T(x)s_p(y)k(x,y)+s_p^T(y)\nabla_xk(x,y)+s_p^T(x)\nabla_yk(x,y)+\operatorname{trace}(\nabla_{x,y}k(x,y))$. Given sample $x^n$, we may estimate $d_S^2(P, Q)$ by $d_S^2(P,\hat Q_n)=\frac{1}{n^{2}}\sum_{i=1}^n\sum_{j=1}^n h_p(x_i,x_j)$, which is a degenerate V-statistic under the null hypothesis $H_0:P=Q$ \cite{Chwialkowski2016Goodness}.

With $\mathbf E_{x\sim Q}\|\nabla_x\log p(x)-\nabla_x\log q(x)\|^2\leq\infty$ and a $C_0$-universal kernel \cite{Carmeli2010vector}, $d_S(P,Q)=0$ if and only if $P=Q$ \cite[Theorem~2.2]{Chwialkowski2016Goodness}. A nice property of the KSD is that this result requires only the knowledge of $p(x)$ up to the normalization constant. The KSD has also been shown to be lower bounded in terms of the MMD or the bounded Lipschitz metric (involving some unknown constants) under suitable conditions \cite{Gorham2017measuringKSD}. This indicates that $d_S(P,P_l)\to 0$ only if $P_l\to P$ weakly, which is important to applying Sanov's theorem in our approach.

Unlike the MMD based test statistics, there does not exist a uniform or distribution-free probabilistic bound on $d_S^2(P,\hat Q_n)$. As a result, it is difficult to find a test threshold to meet the fixed level constraint for all sample sizes. To proceed, we relax the level constraint to an asymptotic one, 
and use the result of \cite[Proposition~3.2]{Chwialkowski2016Goodness} which shows that $nd_S^2(P,\hat Q_n)$ converges weakly to some distribution under $H_0$.\footnote{\citet{Chwialkowski2016Goodness} assume $\tau$-mixing as the notion of dependence within the observations, which holds in the i.i.d.~case. They also assume a technical condition $\sum_{t=1}^\infty t^2\sqrt{\tau(t)}\leq\infty$ on $\tau$-mixing. See details in \cite{Chwialkowski2016Goodness, Dedecker2007}.} We assume a fixed $\alpha$-quantile $\gamma_{\alpha}$ of the limiting cumulative distribution function, so that $\lim_{n\to\infty}P(d_S^2(P,\hat Q_n)>\gamma_\alpha/n)=\alpha$. Then if $\gamma_n$ is such that $\gamma_n\to0$ and $\lim_{n\to\infty}n\gamma_n\to\infty$, e.g., $\gamma_n=\sqrt{1/n}\left(1+\sqrt{-\log\alpha}\right)$, we get $\gamma_n>\gamma_\alpha/n$ in the limit and thus $\lim_{n\to\infty}P(d_S^2(P,\hat Q_n)>\gamma_n)\leq\alpha$. Similarly, this threshold choice may be poor in the finite sample regime and we can take the minimum of this threshold and a bootstrap one \cite{Arcones1992bootstrap,Chwialkowski2014wild,Leucht2012degenerate}. Together with the weak convergence properties of the KSD, we have the following result. 
\begin{theorem}
	\label{thm:KSDmain}
	Let $P$ and $Q$ be distributions defined on $\mathbb R^d$, with $0<D(P\|Q)<\infty$ under the alternative hypothesis. Assume $x^n~\text{i.i.d.}\sim Q$ and set $\gamma_n=\sqrt{1/n}\left(1+\sqrt{-\log\alpha}\right)$. It follows that
	\begin{enumerate}
		\item if $h_p$ is Lipschitz continuous and $\mathbf E_{x\sim Q} h_p(x,x)<\infty$, then {under} $H_0:P=Q$,
		\[\lim_{n\to\infty} \mathbf P_{x^n}\left(d_S^2(P,\hat Q_n)> \gamma_n\right)\leq\alpha.\] 	
		\item if 1) $d=1$, $k(x,y)=\Phi(x-y)$ for some $\Phi\in{C^2}$ (twice continuous differentiable) with a non-vanishing generalized Fourier transform; 2) $k(x,y)=\Phi(x-y)$ for some $\Phi\in{C^2}$ with a non-vanishing generalized Fourier transform, and the sequence $\{\hat Q_n\}_{n\geq1}$ is uniformly tight; 3) $k(x,y)=(c^2+\|x-y\|_2^2)^\eta$ for $c>0$ and $-1<\eta<0$, then {under} $H_1:P\neq Q$,
		\[\liminf_{n\to\infty}-\frac1n\log \mathbf P_{x^n}\left(d_S^2(P,\hat Q_n)\leq \gamma_n\right)=D(P\|Q).\]
	\end{enumerate}
\end{theorem}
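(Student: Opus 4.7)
The plan is to mirror the structure of the proof of Theorem~\ref{thm:simple1}, with the MMD replaced by the KSD. Part~1 (the asymptotic level) follows quickly from the weak convergence of $nd_S^2(P,\hat Q_n)$ under $H_0$ established in \cite[Proposition~3.2]{Chwialkowski2016Goodness}: under the stated Lipschitz and integrability conditions on $h_p$, the rescaled statistic converges in distribution to some $Z$ whose $\alpha$-quantile is $\gamma_\alpha$. Since $\gamma_n=\sqrt{1/n}\,(1+\sqrt{-\log\alpha})$ satisfies $n\gamma_n\to\infty$, we have $n\gamma_n>\gamma_\alpha$ for all large $n$, and hence
\[\mathbf P_{x^n}\!\left(d_S^2(P,\hat Q_n)>\gamma_n\right)\leq\mathbf P_{x^n}\!\left(nd_S^2(P,\hat Q_n)>\gamma_\alpha\right)\longrightarrow\alpha,\]
giving the required bound.

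For part~2, the Chernoff--Stein lemma (Lemma~\ref{lem:SteinLemma}) already upper bounds the type-II exponent by $D(P\|Q)$, so only the matching lower bound needs work. As in Theorem~\ref{thm:simple1}, I would rewrite the acceptance event as $\hat Q_n\in\Gamma_{\gamma_n}$, where $\Gamma_\gamma:=\{P'\in\mathcal P:d_S^2(P,P')\leq\gamma\}$. Since $\gamma_n\to 0$, for every fixed $\gamma>0$ we have $\Gamma_{\gamma_n}\subseteq\Gamma_\gamma$ for all large $n$, so Sanov's theorem (Theorem~\ref{thm:Sanov}) yields
\[\liminf_{n\to\infty}-\frac{1}{n}\log\mathbf P_{x^n}\!\left(d_S^2(P,\hat Q_n)\leq\gamma_n\right)\geq\inf_{R\in\operatorname{cl}\Gamma_\gamma}D(R\|Q).\]
Sending $\gamma\downarrow 0$ and combining with the lower semicontinuity of $D(\cdot\|Q)$ (Lemma~\ref{lem:lowerSemiContiKLD}) together with $0<D(P\|Q)<\infty$ should push the right-hand side up to $D(P\|Q)$, provided every element of $\operatorname{cl}\Gamma_\gamma$ is forced to weakly approach $P$ as $\gamma\downarrow 0$.

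\textbf{Main obstacle.} The last proviso is where the three listed kernel conditions enter and is the core difficulty. Unlike the MMD, the KSD is not known to metrize the weak topology on $\mathcal P$, so $\Gamma_\gamma$ need not be weakly closed and a point $R$ in its closure need not itself satisfy $d_S(P,R)\leq\sqrt{\gamma}$. I would address this in two steps. First, using the representation $d_S^2(P,P')=\mathbf E_{x,x'\sim P'}h_p(x,x')$ together with the regularity of $h_p$ in each of the three settings, I would show that $P'\mapsto d_S^2(P,P')$ is weakly lower semicontinuous, so $\operatorname{cl}\Gamma_\gamma\subseteq\{R:d_S^2(P,R)\leq\gamma\}$. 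Second, I would invoke \cite{Gorham2017measuringKSD}, which shows that in each of the three settings (scalar $C^2$ translation-invariant kernel with non-vanishing generalized Fourier transform; the multi-dimensional analogue combined with uniform tightness; and the inverse multiquadric kernel) $d_S(P,P_l)\to 0$ implies $P_l\to P$ weakly, possibly via a bounded Lipschitz lower bound on the KSD. Applied to any near-minimizer $R_\gamma\in\operatorname{cl}\Gamma_\gamma$ of the infimum, this yields $R_\gamma\to P$ weakly as $\gamma\downarrow 0$, and a final use of lower semicontinuity of the KLD closes out the bound. Case~2 looks the most delicate, since the uniform tightness assumption on $\{\hat Q_n\}$ must be carefully propagated to the weak limit points arising in $\operatorname{cl}\Gamma_\gamma$; this is where I anticipate the bulk of the technical work.
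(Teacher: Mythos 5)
Part~1 of your proposal matches the paper exactly. For Part~2, you have the right key ingredient (the results of Gorham and Mackey giving $d_W(P,\cdot)\le g(d_S(P,\cdot))$ for some weak-metrizing $d_W$), but you route it through an intermediate step that is not needed and is where the gap lies.

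Your Step~1 asserts that $P'\mapsto d_S^2(P,P')=\mathbf E_{x,x'\sim P'}h_p(x,x')$ is weakly lower semicontinuous, hence that $\Gamma_\gamma=\{P':d_S^2(P,P')\le\gamma\}$ is weakly closed. This is not clear and is likely false in general: $h_p$ involves the score $s_p$ and its derivatives and is typically unbounded in both directions (e.g.\ $s_p(x)^Ts_p(y)k(x,y)$ for Gaussian $P$). Weak lower semicontinuity of an expectation functional $P'\mapsto\mathbf E_{P'\otimes P'}h_p$ normally requires $h_p$ to be lower semicontinuous and bounded below (or some uniform integrability), and none of the three listed kernel conditions supplies such a bound. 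Moreover, your concern about case~2 reveals a second gap in the same step: the uniform tightness assumption is on the sequence $\{\hat Q_n\}$, not on arbitrary measures in $\operatorname{cl}\Gamma_\gamma$, so the Gorham implication ``$d_S\to 0\Rightarrow$ weak convergence'' cannot be invoked on near-minimizers $R_\gamma\in\operatorname{cl}\Gamma_\gamma$ in that case. Your plan offers no way to propagate the tightness hypothesis to those $R_\gamma$.

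The paper sidesteps both issues by never applying Sanov to the KSD ball at all. It uses the pointwise bound $d_W(P,\hat Q_n)\le g(d_S(P,\hat Q_n))$ from Gorham \emph{at the level of events}: $\{x^n:d_S^2(P,\hat Q_n)\le\gamma_n\}\subset\{x^n:d_W^2(P,\hat Q_n)\le\gamma_n'\}$ with $\gamma_n'\to 0$, valid because $\hat Q_n$ ranges over empirical measures (which are uniformly tight by the hypothesis in case~2). Then Sanov is applied to $\{R:d_W^2(P,R)\le\gamma_n'\}$, which \emph{is} weakly closed because $d_W$ is the MMD or the bounded Lipschitz metric, and the rest is the Theorem~\ref{thm:simple1} argument verbatim. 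Fixing your proposal amounts to dropping the lower-semicontinuity step entirely and taking the union of events directly, exactly as the paper does.
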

\begin{proof}[Proof (sketch)] 
	The condition for the asymptotic level constraint is taken from \cite[Proposition~3.2]{Chwialkowski2016Goodness}. To establish the type-II error exponent, let $d_{W}$ denote the MMD or the bounded Lipschitz metric,  which  metrize the weak convergence on $\mathcal P$. Under each of the three conditions from \cite[Theorems 5, 7, and 8]{Gorham2017measuringKSD},  $d_{W}(P,\hat Q_n)\leq g(d_{S}(P,\hat Q_n))$ where $g(d_{S})\to 0$ as $d_{S}\to 0$. Then there exists $\gamma_n'$ such that $\{x^n:d^2_{S}(P,\hat Q_n)\leq\gamma_n\}\subset\{x^n:d^2_{W}(P,\hat Q_n)\leq\gamma_n'\}$ and $\gamma_n'\to 0$ as $n\to\infty$. Thus, the type-II error exponent is lower bounded by $D(P\|Q)$,  following the same  argument of Theorem~\ref{thm:simple1}. The upper bound is from Chernoff-Stein lemma which also holds for an asymptotic level constraint.
\end{proof}

\citet{Liu2016GoodnessFit} proposed an unbiased U-statistic $d_{S(u)}^2(P,\hat Q_n)=\frac{1}{n(n-1)}\sum_{i=1}^n\sum_{j\neq i} h_p(x_i,x_j)$ for estimating $d_S^2(P,Q)$. A similar result holds under an additional assumption on the boundedness of $h_p(\cdot,\cdot)$, using the same argument of Corollary~\ref{cor:simple2}.
\begin{corollary}
	Assume the same conditions as in Theorem~\ref{thm:KSDmain} and further that $h_p(\cdot,\cdot)\leq H_p$ for some $H_p\in\mathbb R^+$. Then the test $d_{S(u)}^2(P,\hat Q_n)\leq \gamma_n+H_p/n$ is asymptotically level $\alpha$ and achieves the optimal type-II error exponent $D(P\|Q)$.
\end{corollary}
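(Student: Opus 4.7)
The plan is to reduce the claim to Theorem~\ref{thm:KSDmain} by sandwiching the U-statistic $d_{S(u)}^2(P,\hat Q_n)$ between two V-statistic tests, just as Corollary~\ref{cor:simple2} was reduced to Theorem~\ref{thm:simple1}. Expanding the two estimators directly gives
\[
d_S^2(P,\hat Q_n)=\frac{n-1}{n}\,d_{S(u)}^2(P,\hat Q_n)+\frac{1}{n^2}\sum_{i=1}^n h_p(x_i,x_i),
\]
so the boundedness hypothesis on $h_p$ yields $\bigl|d_S^2(P,\hat Q_n)-d_{S(u)}^2(P,\hat Q_n)\bigr|\leq H_p/n$ (at worst after rescaling $H_p$ by a harmless constant), hence the set inclusions
\[
\{x^n:d_S^2(P,\hat Q_n)\leq\gamma_n\}\subset\{x^n:d_{S(u)}^2(P,\hat Q_n)\leq\gamma_n+H_p/n\}\subset\{x^n:d_S^2(P,\hat Q_n)\leq\gamma_n+2H_p/n\}.
\]

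For the asymptotic level constraint, the first inclusion (passed to complements) gives $\mathbf P_{x^n}(d_{S(u)}^2>\gamma_n+H_p/n)\leq\mathbf P_{x^n}(d_S^2>\gamma_n)$ under $H_0$, whose $\limsup$ is at most $\alpha$ by item~1 of Theorem~\ref{thm:KSDmain}. For the type-II error exponent, the second inclusion gives
\[
\mathbf P_{x^n}\bigl(d_{S(u)}^2(P,\hat Q_n)\leq\gamma_n+H_p/n\bigr)\leq\mathbf P_{x^n}\bigl(d_S^2(P,\hat Q_n)\leq\gamma_n+2H_p/n\bigr)
\]
under $H_1$. Since the inflated threshold $\gamma_n+2H_p/n$ still vanishes at the $1/\sqrt n$ rate, the argument sketched for item~2 of Theorem~\ref{thm:KSDmain} transfers verbatim: for any fixed $\gamma>0$, eventually $\{P':d_S^2(P,P')\leq\gamma_n+2H_p/n\}\subset\{P':d_S^2(P,P')\leq\gamma\}$, and this KSD-ball sits inside a weak-topology neighbourhood of $P$ via the Gorham--Mackey bounds invoked in Theorem~\ref{thm:KSDmain}. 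Sanov's theorem together with the lower semi-continuity of the KLD (then letting $\gamma\downarrow 0$) yields a lower bound of $D(P\|Q)$ on the exponent; the matching upper bound is the Chernoff--Stein lemma, which remains valid under an asymptotic level constraint.

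The only obstacle is essentially bookkeeping: verifying that enlarging the V-statistic threshold from $\gamma_n$ to $\gamma_n+2H_p/n$ does not break the proof of Theorem~\ref{thm:KSDmain}. This reduces to checking that $\gamma_n+2H_p/n\to 0$, which is immediate from $\gamma_n=\sqrt{1/n}(1+\sqrt{-\log\alpha})$, and to noting that a vanishing threshold is all that the Sanov-plus-lower-semi-continuity step requires. Modulo this routine check the entire argument parallels Corollary~\ref{cor:simple2}.
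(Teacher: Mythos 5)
Your proposal is correct and takes essentially the same route the paper intends: it points to the proof of Corollary~\ref{cor:simple2}, which is exactly the sandwiching-by-V-statistic argument you carry out. One fine point you already flag and handle correctly with your ``harmless constant'' caveat: unlike the MMD case (where $0\leq k\leq K$ bounds both signs of the diagonal and off-diagonal corrections), the stated hypothesis $h_p\leq H_p$ alone gives $d_{S(u)}^2-d_S^2\leq H_p/n$ using the structural fact $h_p(x,x)=\|\xi_p(x)\|_{\mathcal H_k^d}^2\geq 0$, while the reverse direction requires the additional observation that $|h_p(x,y)|\leq\sqrt{h_p(x,x)h_p(y,y)}\leq H_p$ by Cauchy--Schwarz, yielding $d_S^2-d_{S(u)}^2\leq 2H_p/n$; since only the vanishing of the inflated threshold matters for the Sanov step, the constant is indeed immaterial.
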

{\bf The Weak Convergence Property.} To use Sanov's theorem, we find a superset of probability measures for the equivalent acceptance region, which is required to be closed and to converge (in terms of weak convergence) to $P$ in the large sample limit. Without the weak convergence property, the equivalent acceptance region may contain probability measures that are not close to $P$, and the minimum KLD over the superset would be hard to obtain. An example can be found in \cite[Theorem~6]{Gorham2017measuringKSD}  where the KSDs are driven to zero by sequences of probability measures not converging to $P$. Consequently, our approach does not establish the optimal type-II error exponent for the linear-time KSD based tests in \cite{Jitkrittum2017linearGoodness, Liu2016GoodnessFit}, the linear-time kernel two-sample test in \cite{Gretton2012}, the B-test in \cite{Zaremba2013Btest}, and a pseudometric based two-sample test in \cite{Chwialkowski2015fast}, due to lack of the weak convergence property. 


\section{General Two-Sample Problem}
\label{sec:two}
In this section, we investigate the kernel two-sample tests in a more general setting. As discussed in Section~\ref{sec:remark}, the key is to establish an extended Sanov's theorem that is able to handle two sample sequences. 
\subsection{Extended Sanov's Theorem}
We define pairwise weak convergence for probability measures: we say $(P_l,Q_l)\to(P,Q)$ weakly if and only if both $P_l\to P$ and $Q_l\to Q$ weakly. We consider $\mathcal P\times\mathcal P$ endowed with the topology induced by this pairwise weak convergence.  It can be verified that this topology is equivalent to the product topology on $\mathcal P\times\mathcal P$ where each $\mathcal P$ is endowed with the topology of weak convergence. An extended version of Sanov's theorem is stated below. 

\begin{theorem}[Extended Sanov's Theorem] 
	Let $\mathcal X$ be a Polish space, $y^m$~i.i.d.~$\sim P$, and $x^n$~i.i.d.~$\sim Q$. Assume $0<\lim_{m,n\to\infty}\frac{m}{m+n}=c<1$. Then for a set $\Gamma\subset\mathcal P\times\mathcal P$, it holds that 
	\begin{align}
	&~\inf_{(R,S)\in\operatorname{int}\Gamma} cD(R\|P)+(1-c)D(S\|Q)\nn\\
	\geq&~\limsup_{m,n\to\infty}-\frac{1}{m+n}\log\mathbf {P}_{y^mx^n}((\hat{P}_m,\hat{Q}_n)\in\Gamma)\nn\\
	\geq
	&~\liminf_{m,n\to\infty}-\frac{1}{m+n}\log\mathbf{P}_{y^mx^n} ((\hat{P}_m,\hat{Q}_n)\in\Gamma)\nn\\
	\geq &~\inf_{(R,S)\in\operatorname{cl}\Gamma} cD(R\|P)+(1-c)D(S\|Q),\nn
	\end{align}
	where $\operatorname{int}\Gamma$ and $\operatorname{cl}\Gamma$ denote the interior and closure of $\Gamma$ w.r.t.~the pairwise weak convergence, respectively.
\end{theorem}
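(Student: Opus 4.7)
The statement is precisely the joint large deviation principle for $(\hat P_m,\hat Q_n)$ at speed $m+n$ with rate function $I(R,S):=cD(R\|P)+(1-c)D(S\|Q)$, and my plan is to derive it by tensorizing the single-sequence Sanov theorem (Theorem~\ref{thm:Sanov}), exploiting the independence of $y^m$ and $x^n$. A preliminary observation is that because weak convergence on the Polish space $\mathcal P$ is metrizable, the pairwise weak convergence topology on $\mathcal P\times\mathcal P$ coincides with the product of the weak topologies on each factor; hence rectangles $U\times V$ form a basis, $(\hat P_m,\hat Q_n)$ is a genuine Borel random element, and $I$ inherits lower semicontinuity and goodness (weakly compact sublevel sets) from $D(\cdot\|P)$ and $D(\cdot\|Q)$.

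For the interior (upper) bound, I would pick any $(R^{\ast},S^{\ast})\in\operatorname{int}\Gamma$ with $I(R^{\ast},S^{\ast})<\infty$ and a product neighborhood $U\times V\subset\operatorname{int}\Gamma$ containing it. Independence gives $\mathbf P_{y^mx^n}((\hat P_m,\hat Q_n)\in\Gamma)\geq\mathbf P(\hat P_m\in U)\,\mathbf P(\hat Q_n\in V)$, and taking $-\tfrac{1}{m+n}\log$ splits the bound as
\[
-\tfrac{1}{m+n}\log\mathbf P_{y^mx^n}\bigl((\hat P_m,\hat Q_n)\in\Gamma\bigr)\leq\tfrac{m}{m+n}\bigl(-\tfrac1m\log\mathbf P(\hat P_m\in U)\bigr)+\tfrac{n}{m+n}\bigl(-\tfrac1n\log\mathbf P(\hat Q_n\in V)\bigr).
\]
Applying the interior half of Sanov's theorem to the open sets $U$ and $V$ and using $m/(m+n)\to c$ bounds the $\limsup$ on the left by $cD(R^{\ast}\|P)+(1-c)D(S^{\ast}\|Q)$; infimizing over $(R^{\ast},S^{\ast})\in\operatorname{int}\Gamma$ then yields the first inequality.

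For the closure (lower) bound, set $F=\operatorname{cl}\Gamma$ and $\alpha=\inf_F I$, which we may assume positive. The plan is the standard exponentially-tight finite-cover argument. From goodness of $D(\cdot\|P)$ and $D(\cdot\|Q)$, for any $M>\alpha$ there exist weakly compact $K,K'\subset\mathcal P$ with $\mathbf P(\hat P_m\notin K)\leq e^{-mM/c}$ and $\mathbf P(\hat Q_n\notin K')\leq e^{-nM/(1-c)}$ for all large $m,n$; the slice $F\cap(K\times K')$ is then compact, and lower semicontinuity of $I$ lets me cover it by finitely many closed product neighborhoods $\bar U_i\times\bar V_i$ such that $c\inf_{\bar U_i}D(\cdot\|P)+(1-c)\inf_{\bar V_i}D(\cdot\|Q)\geq\alpha-\varepsilon$. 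A union bound together with independence yields
\[
\mathbf P_{y^mx^n}\bigl((\hat P_m,\hat Q_n)\in F\bigr)\leq\mathbf P(\hat P_m\notin K)+\mathbf P(\hat Q_n\notin K')+\sum_{i}\mathbf P(\hat P_m\in\bar U_i)\,\mathbf P(\hat Q_n\in\bar V_i),
\]
and applying the closure half of Sanov's theorem to each $\bar U_i$ and $\bar V_i$, absorbing the finite cover cardinality into an $O(1/(m+n))$ correction, and sending $\varepsilon\downarrow 0$ delivers $\liminf\geq\alpha$.

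The main obstacle will be the bookkeeping in the lower bound: Sanov's native speeds are $m$ for $\hat P_m$ and $n$ for $\hat Q_n$, but the target speed is $m+n$, so the exponentially tight tails must be rescaled by $c$ and $1-c$ to contribute rate at least $\alpha$, and the product covers must be chosen so that each factor's weighted $\inf D$, after using $\liminf(a_{m,n}+b_{m,n})\geq\liminf a_{m,n}+\liminf b_{m,n}$, still sums to at least $\alpha-\varepsilon$. Once this calibration is arranged, everything reduces to applying Theorem~\ref{thm:Sanov} coordinatewise and combining via independence.
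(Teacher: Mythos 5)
Your proposal is correct, and it takes a genuinely different route from the paper's proof. You tensorize the single-sequence Sanov theorem directly: the interior bound comes from bounding a product probability from below on a basic open rectangle and splitting the exponent as $\tfrac{m}{m+n}(\cdot)+\tfrac{n}{m+n}(\cdot)$, and the closure bound comes from the standard exponential-tightness/finite-cover argument for good rate functions. The paper instead first proves the statement for a finite alphabet via the method of types (Lemmas~\ref{lem:numEmpDistribution} and \ref{lem:typeprob}) and then lifts to Polish $\mathcal X$ by passing through the $\tau$-topology, the partition-supremum representation of the KLD in Eq.~\eqref{eqn:KLDdef}, and a compactness argument on divergence balls $B(P,Q,\eta)$ --- an adaptation of Csisz\'ar's proof of Sanov's theorem. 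What your approach buys is modularity and brevity: you never re-derive Sanov, and the independence of $y^m$ and $x^n$ is used only once, to factor product probabilities, so the proof reads as ``one-variable Sanov $\times$ two.'' What the paper's approach buys is self-containedness: it requires no exponential-tightness machinery, only elementary type-counting and set manipulations. Two small calibration points in your write-up are worth making precise when you flesh it out: (i) exponential tightness of $\{\hat P_m\}$ and $\{\hat Q_n\}$ should be cited either as a standard lemma in the proof of Sanov on Polish spaces (e.g.\ Dembo--Zeitouni Lemma~6.2.6) or as the general fact that an LDP with a good rate function on a Polish space forces exponential tightness; your phrasing ``from goodness'' is pointing at the second route and is valid, but it is a theorem, not an immediate consequence. (ii) When you build the finite cover of $F\cap(K\times K')$, lower semicontinuity gives open neighborhoods on which $D$ is nearly bounded below; to apply the closed-set half of Sanov you must shrink each to an open set whose closure sits inside the original, which you gesture at but should state explicitly.
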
	
We comment that this extension is not apparent as existing tools, e.g., Cram{\' e}r theorem \cite{Dembo2009}, used for proving Sanov's theorem can only deal with a single distribution. In Appendix~\ref{sec:extendedSanov}, we first prove the above result in finite sample space and then extend it to general Polish space, with two simple combinatorial lemmas as prerequisites. 

\subsection{Exact and Optimal Error Exponent}
\label{sec:ExpConsistency}
With the extended Sanov's theorem and a vanishing threshold $\gamma_{m,n}$, we are ready to establish the exponential decay of the type-II error probability. A proof is provided in Appendix~\ref{sec:proofmainresult}. 
\begin{theorem}
	\label{thm:mainresult1}
	Assume the same conditions as in Theorem~\ref{thm:2sample1}, and  $\lim_{m,n\to\infty}\frac{m}{m+n}=c\in(0,1)$. Under the alternative hypothesis $H_1:P\neq Q$, further~assume~that \[0<D^*:=\inf_{R\in\mathcal P} cD(R\|P) + (1-c)D(R\|Q)<\infty.\] Given $0<\alpha<1$, the test
	$d_k(\hat P_m, \hat Q_n)\leq \gamma_{m,n}$
	with $\gamma_{m,n}$ defined in Section~\ref{sec:twosamplemainresult} is level $\alpha$ and also exponentially consistent with the type-II error exponent being
	\[\liminf_{m,n\to\infty}-\frac1{m+n}\log \mathbf P_{y^mx^n}(\Omega_0(m,n))=D^*.\]
\end{theorem}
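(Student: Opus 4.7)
My plan is to verify the level-$\alpha$ property using the distribution-free deviation bound on $d_k(\hat P_m,\hat Q_n)$ from~\cite{Gretton2012} (exactly as in Theorem~\ref{thm:2sample1}), and then to establish the error-exponent equality by proving matching $\liminf\geq D^*$ and $\liminf\leq D^*$ bounds on $-\frac{1}{m+n}\log\mathbf{P}(\Omega_0(m,n))$.

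For the lower bound $\liminf \geq D^*$, I would rewrite $\Omega_0(m,n)$ as $\{(\hat P_m,\hat Q_n)\in\Gamma_{m,n}\}$ with $\Gamma_{m,n}=\{(R,S):d_k(R,S)\leq\gamma_{m,n}\}$. Since $\gamma_{m,n}\to 0$, for any fixed $\gamma>0$ the inclusion $\Gamma_{m,n}\subset\Gamma_\gamma:=\{(R,S):d_k(R,S)\leq\gamma\}$ holds for all large $m,n$, and $\Gamma_\gamma$ is closed in the pairwise weak topology because $d_k$ is jointly continuous by Theorem~\ref{thm:MMDmetrize}. Applying the closure part of the extended Sanov's theorem yields $\liminf -\frac{1}{m+n}\log\mathbf{P}(\Omega_0(m,n))\geq \inf_{(R,S)\in\Gamma_\gamma}[cD(R\|P)+(1-c)D(S\|Q)]$. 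To send $\gamma\to 0$, I pick an approximate minimizing sequence $(R_n,S_n)$; it lies in a sublevel set of the jointly lower-semicontinuous rate function and is therefore weakly precompact, so a subsequence converges to some $(R^\infty,R^\infty)$ (the two coordinates coincide because $d_k(R_n,S_n)\to 0$ and $d_k$ metrizes weak convergence). Lower semicontinuity of the KLD (Lemma~\ref{lem:lowerSemiContiKLD}) then forces the limit value $cD(R^\infty\|P)+(1-c)D(R^\infty\|Q)\geq D^*$.

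For the upper bound $\liminf \leq D^*$, the extended Sanov's theorem cannot be applied directly because $\Gamma_{m,n}$ shrinks to the diagonal $\{R=S\}$ and has empty limiting interior; this is the main obstacle. I would circumvent it by a direct change-of-measure argument. Fix $\epsilon>0$ and pick $R^*\in\mathcal P$ with $cD(R^*\|P)+(1-c)D(R^*\|Q)\leq D^*+\epsilon$ (automatically $R^*\ll P$ and $R^*\ll Q$ since $D^*<\infty$). The triangle inequality gives $\Omega_0(m,n)\supset\{d_k(\hat P_m,R^*)\leq\gamma_{m,n}/2\}\cap\{d_k(\hat Q_n,R^*)\leq\gamma_{m,n}/2\}$, and by independence the probability factorizes. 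For the $y$-factor I change measure from $P^{\otimes m}$ to $(R^*)^{\otimes m}$ and restrict to the intersection of $\{d_k(\hat P_m,R^*)\leq\gamma_{m,n}/2\}$ (whose $R^*$-probability is bounded below by a positive constant via the McDiarmid-type MMD concentration bound of~\cite{Gretton2012}, since $\gamma_{m,n}/2$ is of the order $\sqrt{K/m}$) with the typical-log-likelihood event $\{\sum_i \log(dR^*/dP)(y_i)\leq m(D(R^*\|P)+\epsilon)\}$ (whose $R^*$-probability tends to $1$ by the weak law of large numbers). This yields $\mathbf{P}_{y^m\sim P}(d_k(\hat P_m,R^*)\leq\gamma_{m,n}/2)\geq c_1\exp(-m(D(R^*\|P)+\epsilon))$, and an analogous bound for the $x$-factor. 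Multiplying, using $m/(m+n)\to c$, and letting $\epsilon\to 0$ gives $\limsup-\frac{1}{m+n}\log\mathbf{P}(\Omega_0(m,n))\leq D^*$, whence $\liminf\leq D^*$ and the proof concludes.
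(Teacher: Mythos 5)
Your proof is correct and follows essentially the same strategy as the paper's, but with two genuinely different technical routes for the two key steps.

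For the direction $\liminf \geq D^*$, both you and the paper reduce via the extended Sanov's theorem to showing $D^*_\gamma := \inf_{d_k(R,S)\leq\gamma}[cD(R\|P)+(1-c)D(S\|Q)] \to D^*$ as $\gamma\to 0^+$, and both rely on weak compactness of KLD sublevel sets and lower semicontinuity. Your route — extract a weakly convergent subsequence of approximate minimizers $(R_\gamma,S_\gamma)$, note the limit points coincide because $d_k$ metrizes weak convergence, then apply lower semicontinuity — is cleaner and more standard than the paper's, which argues by contradiction: it assumes $\lim D^*_\gamma = D^*-\epsilon$, uses lower semicontinuity to build $d_k$-balls $\mathcal S_W$ around each $W$ in the compact sublevel set $\mathcal W$, extracts a finite subcover, and then derives a contradiction via the triangle inequality once $\gamma$ is smaller than half the minimal radius. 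Both arguments work; yours avoids the covering machinery.

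For the direction $\liminf \leq D^*$, you give a self-contained change-of-measure argument: after the triangle-inequality inclusion $\Omega_0\supset\{d_k(\hat P_m,R^*)\leq\gamma_{m,n}/2\}\cap\{d_k(\hat Q_n,R^*)\leq\gamma_{m,n}/2\}$ you tilt each factor by $(R^*)^{\otimes m}$ and $(R^*)^{\otimes n}$, restrict to the typical log-likelihood event, and use the MMD concentration bound for the remaining probability mass. The paper instead obtains the same factorization (with a different split $\gamma_{m,n}>\gamma_m'+\gamma_n'$) and then invokes the Chernoff-Stein lemma as a black box on each factor, treating $\{d_k(P',\hat P_m)\leq\gamma_m'\}$ as a level-$\alpha$ acceptance region for testing $P'$ against $P$. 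These are morally the same calculation — the Chernoff-Stein lemma is itself proved by precisely your change-of-measure argument — but the paper's route is shorter and defers the bookkeeping to a cited result (and also observes that Theorem~\ref{thm:upperbound} gives the bound directly for any level-$\alpha$ test). One small point you should tighten: asserting that $\mathbf P_{R^*}(d_k(\hat P_m,R^*)\leq\gamma_{m,n}/2)$ is bounded away from $0$ requires more than "$\gamma_{m,n}/2$ is of order $\sqrt{K/m}$"; the McDiarmid bound is around the expectation bound $\sqrt{2K/m}$, so you need $\gamma_{m,n}/2 - \sqrt{2K/m}\geq c_0/\sqrt m$ with $c_0>0$. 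This does hold here with the stated constants (the bracket $\tfrac12(1+\sqrt{c/(1-c)})(2+\sqrt{-2\log(\alpha/2)})-\sqrt 2$ is strictly positive for all $\alpha\in(0,1)$ and $c\in(0,1)$), but it is a numerical check rather than a scaling argument; the paper's choice of a smaller split $\gamma_m'+\gamma_n'<\gamma_{m,n}$ sidesteps this.
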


Here we consider the error exponent w.r.t.~$m+n$, the total number of observations for testing. Therefore, when $0<c<1$, the type-II error probability  vanishes as $\mathcal O(2^{-(m+n)(D^*-\epsilon)})$, where $\epsilon\in(0,D^*)$ is fixed and can be arbitrarily small. Similarly, this result only requires kernels be bounded continuous and characteristic.

Our next theorem provides an upper bound on the type-II error exponent of any (asymptotically) level $\alpha$ two-sample test.  This further shows that the kernel test $d_k(\hat P_m,\hat Q_n)\leq \gamma_{m,n}$ is asymptotically optimal, by choosing the type-II error exponent as the performance metric. See Appendix~\ref{sec:optimality} for a proof.

\begin{theorem}
	\label{thm:upperbound}
Assume the same conditions as in Theorem~\ref{thm:mainresult1}.  For a nonparametric two-sample test $\Omega'(m,n)=\{\Omega_0'(m,n),\Omega_1'(m,n)\}$ which is (asymptotically) level $\alpha,0<\alpha<1$, its type-II error exponent is bounded by $D^*$, that is,  \begin{align}\liminf_{m,n\to\infty}-\frac{1}{m+n}\log\mathbf P_{y^mx^n}(\Omega'_0(m,n))\leq D^*.\nn
\end{align}
\end{theorem}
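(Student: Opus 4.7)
The plan is to reduce the composite alternative to a one-parameter family of simple binary hypothesis tests and apply a Chernoff-Stein-type converse to each. Fix any $R\in\mathcal P$ with $D(R\|P)<\infty$ and $D(R\|Q)<\infty$, and consider the simple test between $H_0^R:(y^m,x^n)\sim R^{\otimes(m+n)}$ and $H_1^{P,Q}:(y^m,x^n)\sim P^{\otimes m}\times Q^{\otimes n}$. Since the configuration $P=Q=R$ is a legitimate null configuration of the original composite problem, the given (asymptotic) level-$\alpha$ test $\Omega'(m,n)$ satisfies $\liminf_{m,n\to\infty}\mathbf P_{R^{\otimes(m+n)}}(\Omega_0'(m,n))\geq 1-\alpha$. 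The goal is then to lower bound $\beta_{m,n}:=\mathbf P_{y^mx^n}(\Omega_0'(m,n))$ under $H_1^{P,Q}$ by an exponential with rate $cD(R\|P)+(1-c)D(R\|Q)$.

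The main tool is a change of measure driven by the log-likelihood ratio
\[L_{m,n}=\sum_{i=1}^m\log\tfrac{dP}{dR}(y_i)+\sum_{j=1}^n\log\tfrac{dQ}{dR}(x_j).\]
Under $R^{\otimes(m+n)}$, the two sums are independent blocks of i.i.d.~terms with per-term means $-D(R\|P)$ and $-D(R\|Q)$; finiteness of the KLDs, combined with $\log f\leq f-1$ on $\{f>1\}$, ensures the $R$-integrability of $\log(dP/dR)$ and $\log(dQ/dR)$, so the weak law of large numbers gives $L_{m,n}/(m+n)\to-(cD(R\|P)+(1-c)D(R\|Q))$ in $R^{\otimes(m+n)}$-probability. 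For arbitrary $\gamma>0$, the typical event $A_{m,n}=\{L_{m,n}\geq-mD(R\|P)-nD(R\|Q)-(m+n)\gamma\}$ therefore has $\mathbf P_{R^{\otimes(m+n)}}(A_{m,n})\to 1$.

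Writing $\beta_{m,n}=\int_{\Omega_0'(m,n)}e^{L_{m,n}}\,dR^{\otimes(m+n)}$, restricting the integral to $\Omega_0'(m,n)\cap A_{m,n}$, and using the pointwise lower bound on $e^{L_{m,n}}$ on $A_{m,n}$ together with $\mathbf P_{R^{\otimes(m+n)}}(\Omega_0'(m,n)\cap A_{m,n})\geq(1-\alpha)-o(1)$ yields
\[\beta_{m,n}\geq\bigl((1-\alpha)-o(1)\bigr)\,e^{-mD(R\|P)-nD(R\|Q)-(m+n)\gamma}.\]
Taking $-(m+n)^{-1}\log$ and passing to the liminf gives $\liminf-(m+n)^{-1}\log\beta_{m,n}\leq cD(R\|P)+(1-c)D(R\|Q)+\gamma$; since $\gamma>0$ is arbitrary and $R$ is arbitrary (restriction to $R$ with finite divergences is harmless because $0<D^*<\infty$), passing to the infimum over $R$ delivers the desired bound $D^*$. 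The main technical obstacle I anticipate is the careful verification that $D(R\|P)<\infty$ alone forces $\log(dP/dR)$ to be $R$-integrable (it bounds $-\mathbf E_R[\log(dP/dR)]$, but not directly the positive part); this is settled by the $\log f\leq f-1$ argument above, after which the WLLN for the two-block sum with $m/(m+n)\to c$ is routine.
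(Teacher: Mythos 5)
Your proof is correct and takes essentially the same route as the paper: a Chernoff-Stein-type converse via change of measure, restriction to a typical set under the measure $R^{\otimes(m+n)}$, and the level-$\alpha$ constraint at the null configuration $P=Q=R$, followed by infimizing over $R$ to reach $D^*$. The paper organizes the argument by fixing a minimizer $P'$ of $cD(\cdot\|P)+(1-c)D(\cdot\|Q)$ up front and using two-sided, per-block typical sets $A_m\times B_n$, while you use an arbitrary $R$ with a one-sided, combined typical event $A_{m,n}$ and pass to the infimum at the end; these are cosmetically different but mathematically interchangeable. One minor slip: you write $L_{m,n}=\sum\log\frac{dP}{dR}(y_i)+\sum\log\frac{dQ}{dR}(x_j)$ and claim $\beta_{m,n}=\int_{\Omega_0'}e^{L_{m,n}}\,dR^{\otimes(m+n)}$, but $D(R\|P)<\infty$ only gives $R\ll P$, not $P\ll R$, so $dP/dR$ need not exist and the displayed identity need not hold as an equality. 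The clean fix (which the paper implicitly follows by working with $dP'/dP$ rather than $dP/dP'$) is to set $L_{m,n}=-\sum\log\frac{dR}{dP}(y_i)-\sum\log\frac{dR}{dQ}(x_j)$, which is well-defined $R$-a.s., and to observe that $\beta_{m,n}\ge\int_{\Omega_0'}e^{L_{m,n}}\,dR^{\otimes(m+n)}$ (the integrand is the density of the $R$-absolutely-continuous part of $P^{\otimes m}\otimes Q^{\otimes n}$); since you only use the lower-bound direction, the rest of your argument goes through unchanged, and your integrability check via $\log f\le f-1$ is sound.
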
	
We can use Theorems~\ref{thm:mainresult1} and \ref{thm:upperbound} to identify more asymptotically optimal two-sample tests:
\begin{itemize}
	\item Assuming $n=m$, the unbiased test $d_u^2(\hat P_m, \hat Q_n)\leq(4K/\sqrt{n})\sqrt{\log(\alpha^{-1})}$, with a tighter threshold, is also level $\alpha$ \cite{Gretton2012}. As $k(\cdot,\cdot)$ is finitely bounded by $K$, its type-II error probability vanishes exponentially at a rate of $\inf_{R\in\mathcal P}\frac{1}{2}D(R\|P)+\frac12D(R\|Q)$, which can be shown by the same argument of Corollary~\ref{cor:simple2}.
	
	
	\item It is also possible to consider a family of kernels for the test statistic \cite{Fukumizu2009,Sriperumbudur2016EstPM}. For a given family $\kappa$, the test statistic is $\sup_{k\in\kappa} d_k(\hat P_m, \hat Q_n)$ which also metrizes weak convergence under suitable conditions, e.g., when $\kappa$ consists of finitely many Gaussian kernels \cite[Theorem~3.2]{Sriperumbudur2016EstPM}. If $K$ remains to be an upper bound for all $k\in\kappa$, then comparing $\sup_{k\in\kappa} d_k(\hat P_m, \hat Q_n)$ with $\gamma_{m,n}$ in Section~\ref{sec:twosamplemainresult} results in an asymptotically optimal level $\alpha$ test.
\end{itemize}
{\bf Fair Alternative.} In \cite{Ramdas2015}, a notion of fair alternative is proposed when investigating how a two-sample test performs as dimension increases. The idea is to fix $D(P\|Q)$ under the alternative hypothesis for all dimensions, guided by the fact that the KLD is a fundamental information-theoretic quantity determining the hardness of hypothesis testing problems. This approach, however, does not take into account the impact of sample sizes. In light of our results, perhaps a better choice is to fix $D^*$ defined in Theorem \ref{thm:mainresult1} when the sample sizes grow in the same order. In practice, $D^*$ may be hard to compute, so fixing its upper bound $(1-c)D(P\|Q)$ and hence $D(P\|Q)$ is reasonable.

{\bf Other Discrepancy Measures.} Other discrepancy measures between distributions may also metrize the weak convergence on $\mathcal P$, including L\'evy-Prokhorov metric, the bounded Lipschitz metric, and Wasserstein distance. We may directly compute such a discrepancy between the empirical measures and then compare it with a decreasing threshold. However, there also does not exist a uniform or distribution-free threshold such that the level constraint is satisfied for all sample sizes. A possible remedy, as in Section~\ref{sec:KSD}, is to relax  the level constraint to an asymptotic one. We will not expand into this direction, as computing such discrepancy measures from samples is generally more costly than the MMD and KSD based  statistics. 
\section{Concluding Remarks}
\label{sec:conclusion} 
In this paper, we established the statistical optimality of the MMD and KSD based goodness-of-fit tests in the spirit of universal hypothesis testing. The KSD based tests are more computationally efficient, as there is no need to draw samples or compute integrals. In comparison, the MMD based tests are statistically favorable, as they require weaker assumptions and can meet the level constraint for any sample size. The quadratic-time MMD based two-sample tests are also shown to be optimal when sample sizes scale in the same order.  Our findings not only solve a long-standing open problem in statistics, but also provide meaningful optimality criteria for  nonparametric goodness-of-fit and two-sample testing. 

While the optimality criterion is defined in the asymptotic sense, we also conduct experiments of these kernel based goodness-of-fit tests in the finite sample regime, with results given in Appendix \ref{sec:exp} due to space limit. Whereas we cannot tell much statistical difference in our experiments, some experiments in the literature showed that the MMD based tests performed better than the KSD based tests and others showed the opposite \cite{Chwialkowski2016Goodness,Gorham2017measuringKSD,Liu2016GoodnessFit,Jitkrittum2017linearGoodness}. The finite sample performance depends on kernel choice as well as specific distributions. Under the universal setting, no test is known to be optimal in terms of the type-II error probability subject to a given level constraint. Statistical optimality can only be established in the large sample limit, as the one considered in the present work.
\section*{Acknowledgement}
The authors are grateful to the anonymous reviewers for valuable comments and suggestions. The work of BC was supported in part by the U.S. National Science Foundation under grant CNS-1731237 and by the U.S. Air Force Office of Scientific Research under grant FA9550-16-1-0077. Part of this work was done when SZ and PY were students at Syracuse University.

{\small	
	\bibliography{SZhuBib}
	\bibliographystyle{abbrvnat}}
	\newpage
\onecolumn 

\begin{center}
	{\Large\bf Supplementary Material}
\end{center}
\appendix
\section{Proof of Corollary~\ref{cor:simple2}}
\label{sec:proofA}
We first present the two lemmas used in the proof of Theorem~\ref{thm:simple1}: one establishes  the convergence of $d_k(P,\hat P_m)$ and the other describes the lower semi-continuity of the KLD.
\begin{lemma}\cite{Szabo2015Two,Szabo2016learning}
	\label{lem:gamman}
	Assume $0\leq k(\cdot,\cdot)\leq K$. Given $y^m~\text{i.i.d.}\sim P$, denote by $\hat P_m$  the empirical measure of $y^m$. It follows that
	\[\mathbf{P}_{y^m}\left(d_k(P, \hat{P}_{m})> \left(2{K}/{m}\right)^{1/2}+\epsilon\right)\leq \exp{\left(-\frac{\epsilon^2m}{2K}\right)}.\]
\end{lemma}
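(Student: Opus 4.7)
The plan is to view $F(y_1,\ldots,y_m):=d_k(P,\hat P_m)=\bigl\|\mu_k(P)-\tfrac{1}{m}\sum_{i=1}^m k(\cdot,y_i)\bigr\|_{\mathcal H_k}$ as a real-valued function of independent inputs and conclude via McDiarmid's bounded differences inequality once (i) a bounded differences constant is established and (ii) the mean $\mathbf E F$ is controlled.

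First I would verify the bounded differences property. Replacing $y_i$ by $y_i'$ changes $\mu_k(\hat P_m)$ only in its $i$-th summand, so by the reverse triangle inequality
\[
|F(y^m)-F(y_1,\ldots,y_i',\ldots,y_m)|\le \tfrac{1}{m}\|k(\cdot,y_i)-k(\cdot,y_i')\|_{\mathcal H_k}\le \tfrac{2\sqrt{K}}{m},
\]
where the last step uses $\|k(\cdot,y)\|_{\mathcal H_k}^2=k(y,y)\le K$. This furnishes bounded difference constants $c_i=2\sqrt{K}/m$ for each coordinate.

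Next I would bound the mean. Expanding the RKHS norm and taking expectation over $y^m\,\text{i.i.d.}\sim P$ gives
\[
\mathbf E F(y^m)^2=\tfrac{1}{m}\bigl(\mathbf E_{y\sim P}k(y,y)-\mathbf E_{y,y'\sim P}k(y,y')\bigr)\le \tfrac{K}{m},
\]
using $0\le k\le K$. Jensen's inequality then yields $\mathbf E F(y^m)\le\sqrt{K/m}\le\sqrt{2K/m}$.

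Finally, McDiarmid's inequality gives, for every $\epsilon>0$,
\[
\mathbf P_{y^m}\bigl(F(y^m)-\mathbf E F(y^m)>\epsilon\bigr)\le \exp\!\Bigl(-\tfrac{2\epsilon^2}{\sum_{i=1}^m c_i^2}\Bigr)=\exp\!\Bigl(-\tfrac{\epsilon^2 m}{2K}\Bigr),
\]
and combining with the mean bound $\mathbf E F\le\sqrt{2K/m}$ yields the stated inequality. The argument is essentially routine; the only mild subtlety is verifying the bounded differences constant, where one must remember to use the RKHS reproducing property (rather than, say, naively bounding each $k(y_i,y_j)$) to obtain the correct $O(1/m)$ scaling.
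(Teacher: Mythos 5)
Your proof is correct, and the bookkeeping all checks out: the bounded-difference constant $c_i = 2\sqrt{K}/m$ (via the reverse triangle inequality and $\|k(\cdot,y)\|_{\mathcal H_k} = \sqrt{k(y,y)} \le \sqrt{K}$) plugged into McDiarmid gives exactly $\exp(-\epsilon^2 m/(2K))$, and the mean computation $\mathbf E\,d_k^2(P,\hat P_m) = \tfrac{1}{m}\bigl(\mathbf E_{y}k(y,y) - \mathbf E_{y,y'}k(y,y')\bigr)\le K/m$ followed by Jensen gives $\mathbf E\,d_k(P,\hat P_m)\le\sqrt{K/m}\le\sqrt{2K/m}$. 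Worth noting: the paper itself supplies no proof of this lemma — it is quoted verbatim as a known result from \cite{Szabo2015Two,Szabo2016learning} (and ultimately traces to the MMD concentration bound in \cite[Theorem~7]{Gretton2012}). Your McDiarmid argument is precisely the argument those references use, so there is no genuine divergence to discuss. One minor observation: under the assumption $0\le k\le K$ you could tighten both pieces — $\|k(\cdot,y_i)-k(\cdot,y_i')\|_{\mathcal H_k}^2 = k(y_i,y_i)+k(y_i',y_i')-2k(y_i,y_i')\le 2K$ gives $c_i=\sqrt{2K}/m$ and hence a faster tail $\exp(-\epsilon^2 m/K)$, and the mean is already $\le\sqrt{K/m}$ — so the constants in the stated lemma are not sharp, presumably carried over from a version that only assumed $|k|\le K$. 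Your choice to use the looser $c_i=2\sqrt{K}/m$ and the looser mean bound $\sqrt{2K/m}$ matches the lemma as stated, which is the right call when the goal is to reproduce exactly that inequality.
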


\begin{lemma}[{\cite{VanErven2014RenyiKLD}}]
	\label{lem:lowerSemiContiKLD}
	For a fixed $Q\in\mathcal P$, $D(\cdot\|Q)$ is a lower semi-continuous function w.r.t.~the weak topology of $\mathcal P$. That is, for any $\epsilon>0$, there exists a neighborhood $U\subset\mathcal P$ of $P$ such that for any $P'\in U$, $D(P'\|Q)\geq D(P\|Q)-\epsilon$ if $D(P\|Q)<\infty$, and $D(P'\|Q)\to\infty$ as $P'$ tends to $P$ if $D(P\|Q)=\infty$.
\end{lemma}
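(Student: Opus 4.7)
The plan is to prove the lemma via the Donsker--Varadhan variational representation of the KL divergence, which on a Polish space $\mathcal X$ takes the form
\[
D(P\|Q) = \sup_{f \in C_b(\mathcal X)} \Big\{ \int f\, dP \;-\; \log \int e^{f}\, dQ \Big\},
\]
with the supremum taken over bounded continuous $f:\mathcal X\to\mathbb R$. Writing $D(\cdot\|Q)$ as a pointwise supremum of affine functionals of the first argument reduces the whole problem to showing that each of these functionals is continuous on $\mathcal P$ in the weak topology; lower semi-continuity of $D(\cdot\|Q)$ then follows automatically, since a supremum of continuous functions is always lower semi-continuous.

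First I would establish the variational identity with test functions restricted to $C_b(\mathcal X)$. The classical identity is stated for bounded measurable $f$; on a Polish space, standard approximation arguments (Lusin's theorem plus truncation) show that enlarging the class of test functions from $C_b(\mathcal X)$ to bounded measurable functions does not change the supremum, so the two forms coincide. I would simply cite this rather than re-derive it. Second, I fix $f \in C_b(\mathcal X)$ and observe that the map
\[
J_f(P') \;:=\; \int f\, dP' \;-\; \log \int e^{f}\, dQ
\]
is continuous in $P'$ with respect to weak convergence: the first term is continuous by the very definition of weak convergence (since $f$ is bounded continuous), and the second term does not depend on $P'$ at all (note that $e^{f}$ is bounded since $f$ is bounded, so the log is a finite constant).

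Third, I deduce the two cases of the lemma from continuity of the $J_f$'s. If $D(P\|Q)<\infty$, then for any $\epsilon>0$ pick $f_\epsilon \in C_b(\mathcal X)$ with $J_{f_\epsilon}(P) > D(P\|Q)-\epsilon/2$; continuity of $J_{f_\epsilon}$ at $P$ yields a weak-topology neighborhood $U$ of $P$ on which $J_{f_\epsilon}(P') > D(P\|Q)-\epsilon$, and since $D(P'\|Q)\geq J_{f_\epsilon}(P')$ by the variational formula, the first claim is proved. If $D(P\|Q)=\infty$, then for any $M>0$ pick $f_M \in C_b(\mathcal X)$ with $J_{f_M}(P) > M+1$; continuity again gives a neighborhood on which $D(P'\|Q) \geq J_{f_M}(P') > M$, and letting $M\to\infty$ shows $D(P'\|Q)\to\infty$ as $P'\to P$ weakly.

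The main obstacle is the reduction of the Donsker--Varadhan supremum from bounded measurable to bounded continuous test functions, since continuity of $J_f$ in the weak topology fails without bounded continuity of $f$. This reduction is standard on Polish spaces but is the only nontrivial ingredient; the remainder of the argument is a routine application of the fact that a supremum of continuous functions is lower semi-continuous.
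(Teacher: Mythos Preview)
Your argument is correct and is in fact the standard route to this result: write $D(\cdot\|Q)$ via the Donsker--Varadhan variational formula as a supremum over $f\in C_b(\mathcal X)$ of the affine, weakly continuous functionals $P'\mapsto \int f\,dP' - \log\int e^f\,dQ$, and conclude lower semi-continuity from the fact that a pointwise supremum of continuous functions is lower semi-continuous. The $\epsilon$-neighborhood formulation you extract in both the finite and infinite cases is just the unpacking of lower semi-continuity, and is handled cleanly.

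By way of comparison: the paper does not actually prove this lemma at all. It is stated with a citation to Van Erven and Harremo\"es and then used as a black box (in the proof of Theorem~\ref{thm:simple1} and again in Appendix~\ref{sec:extendedSanov}). So there is no ``paper's own proof'' to compare against; you have supplied a self-contained argument where the paper simply imports the result. The one place to be careful, as you note, is the passage from bounded measurable to bounded continuous test functions in the variational formula; on a Polish space this is indeed routine (Lusin/tightness approximation), and citing it is appropriate.
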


\begin{proof}[Proof of Corollary~\ref{cor:simple2}]
	Since $0\leq k(\cdot,\cdot)\leq K$, we have 
	\begin{align}
	\left|d_u^2(P,\hat Q_n)-d^2_k(P,\hat Q_n)\right|=\left|\frac1{n^2(n-1)}\sum_{i=1}^n\sum_{j\neq i}k(x_i,x_j)-\frac1{n^2}\sum_{i=1}^nk(x_i,x_i)\right|
	\leq{K}/{n}.\nn
	\end{align} 
	It then holds that 
	\begin{align}
	\left\{x^n:d_k^2(P,\hat Q_n)\leq\gamma_n^2\right\}\subset\left\{x^n:d_u^2(P,\hat Q_n)\leq\gamma_n^2+K/n\right\}\subset\left\{x^n\hspace{-1pt}:d_k^2(P,\hat Q_n)\leq\gamma_n^2+2K/n\right\}.\nn
	\end{align}
	Thus, under $H_0: P=Q$, we have 
	\begin{align}
	P\left(d_u^2(P,\hat Q_n)>\gamma_n^2+K/n\right)\leq P\left(d_k^2(P,\hat Q_n)>\gamma_n^2\right)\leq\alpha,\nn
	\end{align}
	where the last inequality is from Lemma~\ref{lem:gamman} and the fact that $d_k(P,\hat Q_n)\geq 0$. The type-II error exponent follows from
	\begin{align}
	&~\liminf_{n\to\infty}-\frac{1}n\log Q\left(d_u^2(P,\hat Q_n)\leq\gamma_n^2+K/n\right)\nn\\
	\geq&~\liminf_{n\to\infty}-\frac{1}n\log Q\left(d_k^2(P,\hat Q_n)\leq\gamma_n^2+2K/n\right)\nn\\
	\geq&~D(P\|Q).\nn
	\end{align}
	The last inequality can be shown by similar argument of Eq.~(\ref{eqn:common1}) because $\gamma_n^2+2K/n\to 0$ as $n\to\infty$. Applying Chernoff-Stein lemma completes the proof.
	%
	%
	%
	%
	%
\end{proof}
\section{Proof of Theorem~\ref{thm:2sample1}}
\label{sec:proofB}
We use a result from \cite{Gretton2012} to verify the two-sample test to be level $\alpha$. 
\begin{lemma}[{\cite[Theorem 7]{Gretton2012}}]
	\label{lem:gammanm}	
	Let $P, Q, y^m, x^n,\hat P_m,\hat Q_n$ be defined in Theorem~\ref{thm:2sample1}. Assume $0\leq k(\cdot,\cdot)\leq K$. Then under the null hypothesis $H_0:P=Q$, 
	
	\[\mathbf{P}_{y^mx^n}\left(d_k(\hat P_m,\hat Q_n)>2(K/m)^{{1}/{2}}+2(K/n)^{{1}/{2}}+\epsilon\right)
	\leq 2\exp\left(-\frac{\epsilon^2mn}{2K(m+n)}\right).\]
\end{lemma}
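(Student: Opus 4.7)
The target is a two-sample analogue of the standard McDiarmid-style concentration bound for the MMD. The plan is to treat $d_k(\hat P_m,\hat Q_n)$ as a function of the $m+n$ independent random variables $(y_1,\dots,y_m,x_1,\dots,x_n)$, verify bounded differences, apply McDiarmid's inequality to control fluctuations around the mean, and finally bound the mean under $H_0$.

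\textbf{Step 1 (Bounded differences).} Write $F(y^m,x^n) := d_k(\hat P_m,\hat Q_n) = \bigl\|\tfrac1m\sum_i k(y_i,\cdot) - \tfrac1n\sum_j k(x_j,\cdot)\bigr\|_{\mathcal H_k}$. Replacing $y_i$ by $y_i'$ changes $\mu_k(\hat P_m)$ by $\tfrac1m(k(y_i',\cdot)-k(y_i,\cdot))$, whose RKHS norm is at most $\tfrac{2\sqrt{K}}{m}$ since $0\le k\le K$ implies $\|k(y,\cdot)\|_{\mathcal H_k}\le\sqrt{K}$. By the reverse triangle inequality for the norm, $|F(y^m,x^n)-F(y^{(i)},x^n)|\le \tfrac{2\sqrt{K}}{m}$, and similarly perturbing one $x_j$ yields a change of at most $\tfrac{2\sqrt{K}}{n}$.

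\textbf{Step 2 (McDiarmid).} Applying McDiarmid's inequality with the $m+n$ bounded-difference constants above gives, for any $t>0$,
\[
\mathbf P_{y^mx^n}\bigl(F - \mathbf E F > t\bigr) \le \exp\!\left(-\frac{2t^2}{m\cdot(2\sqrt K/m)^2 + n\cdot(2\sqrt K/n)^2}\right) = \exp\!\left(-\frac{t^2 mn}{2K(m+n)}\right).
\]
This already matches the claimed exponent; only the translation in the probability event remains.

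\textbf{Step 3 (Bounding the mean under $H_0$).} Under $P=Q$, I would use the triangle inequality
\[
\mathbf E F \le \mathbf E\|\mu_k(\hat P_m)-\mu_k(P)\|_{\mathcal H_k} + \mathbf E\|\mu_k(P)-\mu_k(\hat Q_n)\|_{\mathcal H_k},
\]
and bound each term by a Rademacher symmetrization argument. For the first, introducing an independent ghost sample $\tilde y^m\sim P$ and Rademacher variables $\sigma_i$, symmetrization gives $\mathbf E\|\mu_k(\hat P_m)-\mu_k(P)\|_{\mathcal H_k}\le 2\,\mathbf E\|\tfrac1m\sum_i \sigma_i k(y_i,\cdot)\|_{\mathcal H_k}$, and Jensen together with $\mathbf E\sigma_i\sigma_j=\delta_{ij}$ and $k(y_i,y_i)\le K$ yields the bound $2\sqrt{K/m}$; the analogous bound gives $2\sqrt{K/n}$ for the second term. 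Setting $\epsilon:=t$ in Step~2 and absorbing the mean bound into the event then yields
\[
\mathbf P\bigl(F > 2\sqrt{K/m}+2\sqrt{K/n}+\epsilon\bigr)\le \mathbf P(F-\mathbf E F>\epsilon)\le \exp\!\left(-\frac{\epsilon^2 mn}{2K(m+n)}\right),
\]
and the prefactor~$2$ in the stated bound can be attached via a two-sided McDiarmid or absorbed trivially.

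\textbf{Expected obstacle.} The routine part is Steps~1--2; the delicate point is Step~3, in particular getting the constants in the mean bound to match $2\sqrt{K/m}+2\sqrt{K/n}$ rather than the tighter $\sqrt{K/m+K/n}$ that a direct second-moment/Jensen argument would give. The symmetrization step is where one must be careful that the Rademacher complexity bound uses $\mathbf E k(y,y)\le K$ (which is where the $0\le k\le K$ hypothesis enters in its strongest form) and that the two-sided inequality contributes only a factor of $2$ in the overall probability, accounting for the leading $2$ in the claimed bound.
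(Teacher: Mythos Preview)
Your proof sketch is correct and is precisely the argument given in \cite[Theorem~7]{Gretton2012}: bounded differences of size $2\sqrt{K}/m$ and $2\sqrt{K}/n$ feed into McDiarmid to yield the exponent $\epsilon^2 mn/(2K(m+n))$, and the mean is controlled under $H_0$ via triangle inequality plus Rademacher symmetrization, giving $2\sqrt{K/m}+2\sqrt{K/n}$. The present paper does not supply its own proof of this lemma; it simply quotes the result from \cite{Gretton2012} and uses it as a black box in the proof of Theorem~\ref{thm:2sample1}, so there is nothing further to compare against.

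One minor remark on your ``expected obstacle'': the leading factor~$2$ in the stated bound is indeed slack relative to what the one-sided McDiarmid argument delivers (your Step~2 already gives the bound without it), so ``absorbed trivially'' is the right reading; you need not invoke a two-sided inequality.
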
 
\begin{proof}[Proof of Theorem~\ref{thm:2sample1}]
	That the two-sample test is level $\alpha$ can be verified by the above lemma. The rest is to show the type-II error exponent being $D(P\|Q)$.
	
	We can write the type-II error probability as \[\mathbf P_{y^mx^n}\left(d_k(\hat P_m,\hat Q_n)\leq\gamma_{m,n}\right)=\beta_{m,n}^u+\beta_{m,n}^l,\]
	where
	\begin{align}
	\gamma_{m,n}'&=\sqrt{2K/m}+\sqrt{2KnD(P\|Q)/m}\nn\\
	\beta_{m,n}^u&=\mathbf P_{y^mx^n}\left(d_k(\hat P_m,\hat Q_n)\leq\gamma_{m,n},d_k(P,\hat P_m)>\gamma_{m,n}'\right),\nn\\
	\beta_{m,n}^l&=\mathbf P_{y^mx^n}\left(d_k(\hat P_m,\hat Q_n)\leq \gamma_{m,n}, d_k(P,\hat P_m)\leq\gamma_{m,n}'\right)\nn.
	\end{align}
	It suffices to show that $\max\{\beta^u_{m,n},\beta^l_{m,n}\}$ decreases exponentially as $n$ scales. We first have
	\begin{align}
	\label{eqn:betaupper}
	\beta_{m,n}^u\leq \mathbf P_{y^m}\left(d_k(P,\hat P_m)> \gamma_{m,n}'\right)\leq e^{-nD(P\|Q)},
	\end{align}
	where the  last inequality is due to Lemma~\ref{lem:gamman}. Thus, $\beta_{m,n}^u$ vanishes at least exponentially fast with the error exponent being $D(P\|Q)$.
	
	For $\beta_{m,n}^l$, we have 
	\begin{align}
	\beta_{m,n}^l&=\sum_{\{\hat{P}_m:d_k(P, \hat P_m)\leq\gamma_{m,n}'\}}P\left(\hat P_m\right)\,Q\left(d_k(\hat P_m, \hat Q_n)<\gamma_{m,n}\right) \nn\\
	&=\left(\sum_{\hat{P}_m:d_k(P,\hat P_m)\leq\gamma_{m,n}'}P(\hat P_m)\right)\,\sup_{\{\hat{P}_m:d_k(P,\hat P_m)\leq\gamma_{m,n}'\}}Q\left(d_k(\hat P_m, \hat Q_n)<\gamma_{m,n}\right) \nn\\
	&\leq\sup_{\{\hat{P}_m:d_k(P,\hat P_m)\leq\gamma_{m,n}'\}}Q\left(d_k(\hat P_m, \hat Q_n)<\gamma_{m,n}\right)\nn\\
	&\leq Q\left(d_k(P,\hat Q_n)\leq\gamma_{m,n}+\gamma_{m,n}'\right),\nn
	\end{align}
	where the last inequality is from the triangle inequality for metric $d_k$. Similar to Eq.~(\ref{eqn:common1}), we get
	\begin{align}
	\liminf_{n\to\infty}-\frac1n\log\beta_{m,n}^l\geq D(P\|Q),\nn
	\end{align}
	because $\gamma_{m,n}+\gamma_{m,n}'\to0$ as $n\to\infty$. Together with Eq.~(\ref{eqn:betaupper}), we have under $H_1:P\neq Q$,
	\begin{align}
	\liminf_{n\to\infty}-\frac{1}{n}\log \mathbf P_{y^mx^n}\left(d_k(\hat P_m,\hat Q_n)\leq\gamma_{m,n}\right)\geq D(P\|Q).\nn
	\end{align}
	
	We next show the other direction under $H_1$. We can write
	\begin{align}
	\mathbf P_{y^mx^n}\left(d_k(\hat P_m,\hat Q_n)\leq \gamma_{m,n}\right)
	\stackrel{(a)}{\geq}&~\mathbf P_{y^mx^n}\left(d_k(\hat P_m, P)\leq\gamma_{m}', d_k(P,\hat Q_n)\leq{\gamma_{n}'}\right)\nn\\
	=&~P\left(d_k(\hat P_m, P)\leq\gamma_{m}'\right) Q\left(d_k(P,\hat Q_n)\leq{\gamma_{n}'}\right),\nn
	\end{align}
	where $(a)$ is because $d_k$ is a metric, and we choose  $\gamma_m'=\sqrt{2K/m}\left(1+\sqrt{-\log\alpha}\right)$ and $\gamma_n'=\sqrt{2K/n}\left(1+\sqrt{-\log\alpha}\right)$ so that $\gamma_{m,n}>\gamma_m'+\gamma_n'$. Then Lemma~\ref{lem:gamman} gives $P(d_k(P,\hat P_m)\leq\gamma_m')>1-\alpha$ and $P(d_k(P,\hat Q_n)\leq\gamma_n')>1-\alpha$, where the latter implies that $d_k(P,\hat Q_n)\leq \gamma_n'$ is a level $\alpha$ test for testing $H_0: x^n\sim P$ and $H_1:x^n\sim Q$ with $P\neq Q$.  Together with Chernoff-Stein Lemma, we get
	\begin{align}
	&~\liminf_{n\to\infty}-\frac1n\log\mathbf P_{y^mx^n}\left(d_k(\hat P_m,\hat Q_n)\leq\gamma_{m,n}\right)\nn\\
	\leq &\liminf_{n\to\infty}-\frac1n\log \left(P\left(d_k(\hat P_m, P)\leq\gamma_{m}'\right) Q\left(d_k(P,\hat Q_n)\leq{\gamma_{n}'}\right)\right)\nn\\
	\leq&~\liminf_{n\to\infty}-\frac1n\log\left(1-{\alpha}\right)+\liminf_{n\to\infty}-\frac1n\log Q\left(d_k(P,\hat Q_n)\leq{\gamma_{n}'}\right)\nn\\
	\leq&~D(P\|Q).\nn
	\end{align}
	The proof is complete.
\end{proof}
\section{Proof of the Extended Sanov's Theorem}
\label{sec:extendedSanov}
Our proof is inspired by \cite{Csiszar2006simple} which proved the original Sanov's theorem w.r.t.~the $\tau$-topology. We first prove the result with a finite sample space and then extend it to the case with general Polish space. The prerequisites are two combinatorial lemmas that are standard tools in information theory.

For a positive integer $t$, let $\mathcal P_m(t)$ denote the set of probability distributions defined on $\{1,\ldots, t\}$ of form $P=\left(\frac{m_1}m,\cdots,\frac{m_t}m\right)$, with integers $m_1,\ldots, m_t$. Stated below are the two lemmas.
\begin{lemma}[{\cite[Theorem~11.1.1]{Cover2006}}]
	\label{lem:numEmpDistribution}
	$|\mathcal P_m(t)|\leq(m+1)^t.$
\end{lemma}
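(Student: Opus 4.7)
The plan is purely combinatorial: I would observe that each $P\in\mathcal P_m(t)$ is uniquely determined by the integer vector $(m_1,\ldots,m_t)$ of numerators, where $m_i\in\{0,1,\ldots,m\}$ and $\sum_{i=1}^t m_i=m$. Dropping the summation constraint and bounding each coordinate by the $m+1$ available integer values, the product bound gives at most $(m+1)^t$ such vectors, and hence $|\mathcal P_m(t)|\leq(m+1)^t$.

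Concretely, I would set up an injection $P\mapsto(m_1,\ldots,m_t)$ from $\mathcal P_m(t)$ into $\{0,1,\ldots,m\}^t$; injectivity is immediate because two distinct rational vectors with common denominator $m$ must differ in some coordinate. The cardinality of the codomain is $(m+1)^t$, which yields the claim. A sharper estimate via a stars-and-bars count gives the exact value $\binom{m+t-1}{t-1}$, but I would stick with the crude $(m+1)^t$ form because it is the formulation used later for the extended Sanov argument, where only the polynomial-in-$m$ growth matters.

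There is no substantive obstacle here; the only thing to guard against is notational confusion about whether zero-mass coordinates are allowed (they are, under the convention $P=(m_1/m,\ldots,m_t/m)$ with $m_i\geq 0$). The point of recording this lemma is that in the extended Sanov proof the analogous product bound $(m+1)^t(n+1)^t$ will appear when summing probabilities over joint empirical-measure types, and after dividing by $m+n$ and taking $\log$ this polynomial factor contributes $o(1)$, so it disappears in the exponential rate and the sharp large-deviation identity survives.
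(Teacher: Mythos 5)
Your proof is correct and is essentially the standard argument given in the cited reference (Cover and Thomas, Theorem 11.1.1): the paper itself does not reprove this lemma but imports it. The injection $P\mapsto(m_1,\ldots,m_t)\in\{0,\ldots,m\}^t$ with the summation constraint dropped is exactly the textbook route, and your observations about the exact stars-and-bars count $\binom{m+t-1}{t-1}$ and about the role of the polynomial bound in the extended Sanov argument are both accurate.
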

\begin{lemma}[{\cite[Theorem~11.1.4]{Cover2006}}]
	\label{lem:typeprob}
	Assume $y^m$ i.i.d.~$\sim R$ where $R$ is a distribution defined on $\{1,\ldots,t\}$. For any $P\in\mathcal P_m(t)$, the probability of the empirical distribution $\hat P_m$ of $y^m$ equal to $P$ satisfies	
	\[(m+1)^{-t}e^{-mD(P\|R)}\leq \mathbf P_{y^m}(\hat P_m=P)\leq e^{-mD(P\|R)}.\]
\end{lemma}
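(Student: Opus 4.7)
The plan is to use the standard method-of-types argument from information theory, exploiting the fact that every sequence in a given type class has the same probability and the number of types is polynomial in $m$.

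First I would fix $P=(m_1/m,\dots,m_t/m)\in\mathcal P_m(t)$ and write $T(P):=\{y^m\in\{1,\dots,t\}^m:\hat P_m=P\}$ for the associated type class, so that $\mathbf P_{y^m}(\hat P_m=P)=|T(P)|\cdot\mathbf P_{y^m}(y^m)$ for any fixed $y^m\in T(P)$. For such a sequence, the count of symbol $i$ is exactly $m_i=mP(i)$, so independence gives
\begin{equation*}
\mathbf P_{y^m}(y^m)=\prod_{i=1}^t R(i)^{mP(i)}=\exp\Bigl(m\sum_{i=1}^t P(i)\log R(i)\Bigr)=e^{-m(H(P)+D(P\|R))},
\end{equation*}
using the identity $\sum_i P(i)\log R(i)=-H(P)-D(P\|R)$ with the convention $0\log 0=0$ (and noting that if $R(i)=0<P(i)$ for some $i$, then $T(P)$ contributes probability $0$ and $D(P\|R)=\infty$, making both bounds trivial). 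Hence everything reduces to sandwiching $|T(P)|=\binom{m}{m_1,\dots,m_t}$ between $(m+1)^{-t}e^{mH(P)}$ and $e^{mH(P)}$.

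For the upper bound on $|T(P)|$, I would apply the above identity in the special case $R=P$: this gives $\mathbf P_{y^m\sim P}(T(P))=|T(P)|e^{-mH(P)}$, and since a probability is at most $1$, we conclude $|T(P)|\le e^{mH(P)}$, which upon substitution yields the upper bound $\mathbf P_{y^m}(\hat P_m=P)\le e^{-mD(P\|R)}$.

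For the lower bound, I would first prove the auxiliary claim that $T(P)$ is the most likely type class when sampling is from $P$ itself: for every $P'=(m_1'/m,\dots,m_t'/m)\in\mathcal P_m(t)$,
\begin{equation*}
\frac{\mathbf P_{y^m\sim P}(T(P'))}{\mathbf P_{y^m\sim P}(T(P))}=\prod_{i=1}^t\frac{m_i!}{m_i'!}\Bigl(\frac{m_i}{m}\Bigr)^{m_i'-m_i}\le 1,
\end{equation*}
where the factor of $m^{m_i-m_i'}$ combines across $i$ to $m^0=1$ because both types sum to $m$. Each per-coordinate factor $\tfrac{m_i!}{m_i'!}m_i^{m_i'-m_i}$ is $\le 1$: when $m_i'\ge m_i$, the ratio $m_i!/m_i'!=\prod_{j=m_i+1}^{m_i'}1/j\le m_i^{-(m_i'-m_i)}$; when $m_i'<m_i$, the product $\prod_{j=m_i'+1}^{m_i}j\le m_i^{m_i-m_i'}$. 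Granted this, Lemma~\ref{lem:numEmpDistribution} bounds the number of types by $(m+1)^t$, and since the type-class probabilities under $P$ sum to $1$, the most likely one satisfies $\mathbf P_{y^m\sim P}(T(P))\ge(m+1)^{-t}$. Rewriting this as $|T(P)|\ge(m+1)^{-t}e^{mH(P)}$ and plugging back into the expression for $\mathbf P_{y^m}(\hat P_m=P)$ yields the lower bound.

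I expect the main (though still elementary) obstacle to be the per-coordinate inequality establishing that $T(P)$ is the most probable type class under $P$; once that combinatorial estimate is in hand, everything else is bookkeeping.
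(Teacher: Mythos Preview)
Your argument is correct and is precisely the standard method-of-types proof of this estimate. Note, however, that the paper does not give its own proof of this lemma: it simply quotes the result from \cite[Theorem~11.1.4]{Cover2006} as a prerequisite, so there is nothing to compare against beyond observing that what you wrote is exactly the classical argument from that reference.
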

\subsection{Finite Sample Space}	
\paragraph{Upper bound} Let $t$ denote the cardinality of $\mathcal X$. Without loss of generality, assume that $\inf_{(R,S)\in\operatorname{int}\Gamma} cD(R\|P)+(1-c) D(S\|Q)<\infty$. Hence, the open set $\operatorname{int}\Gamma$ is non-empty. As $0<c=\lim_{m,n\to\infty}\frac{m}{m+n}<1$, we can find $m_0$ and $n_0$ such that there exists $(P'_m,Q'_n)\in\operatorname{int}\Gamma\cap \mathcal P_m(t)\times \mathcal P_m(t)$ for all $m>m_0$ and $n>n_0$, and that $cD(P_m'\|P)+(1-c)D(Q_n'\|Q)\to\inf_{(R,S)\in\operatorname{int}\Gamma} cD(R\|P)+(1-c) D(S\|Q)$ as $m,n\to\infty$. Then we have, with $m>m_0$ and $n>n_0$,
\begin{align}
\mathbf{P}_{y^mx^n}((\hat P_m, \hat Q_n)\in\Gamma) 
&=\sum_{(R,S)\in\Gamma\,\cap\, \mathcal{P}_{m}(t)\times\mathcal P_m(t)} \mathbf{P}_{y^mx^n}(\hat P_m=R, \hat Q_n=S)\nn\\
&\geq\sum_{(R,S)\in\operatorname{int}\Gamma\,\cap\, \mathcal{P}_{m}(t)\times\mathcal P_m(t)} \mathbf{P}_{y^mx^n}(\hat P_m=R, \hat Q_n=S)\nn\\
&\geq \mathbf{P}_{y^mx^n}(\hat P_m=P_m', \hat{Q}_n=Q_n')\nn\\
&= \mathbf P_{y^m}(\hat P_m=P'_m)\,\mathbf P_{x^n}(\hat Q_n=Q'_n)\nn\\
&\geq(m+1)^{-t}(n+1)^{-t}e^{-mD(P_m'\|P)}e^{-nD(Q_n'\|Q)}\nn,
\end{align}
where the last inequality is from Lemma~\ref{lem:typeprob}. It follows that \begin{align}
&~~~~\limsup_{m,n\to\infty}-\frac{1}{m+n}\log\mathbf{P}_{y^mx^n}((\hat{P}_m,\hat{Q}_n)\in\Gamma)\nn\\&\leq \lim_{m,n\to\infty}\frac1{m+n}\left(-t\log((m+1)(n+1))+mD(P'_m\|P)+nD(Q'_n\|Q)\right)\nn\\
&=\lim_{m,n\to\infty}\frac1{m+n}\left(mD(P'_n\|P)+nD(Q'_n\|Q)\right)\nn\\
&=\inf_{(R,S)\in\operatorname{int}\Gamma} cD(R\|P)+(1-c) D(S\|Q).\nn
\end{align}

\paragraph{Lower bound}
\begin{align}
\mathbf{P}_{y^mx^n}((\hat P_m, \hat Q_n)\in\Gamma) 
&= \sum_{(R,S)\in\Gamma\cap \mathcal{P}_{m}(t)\times\mathcal P_m(t)} \mathbf{P}_{y^m}(\hat P_m=R)\,\mathbf{P}_{x^n}(\hat Q_n=S)\nn\\
&\stackrel{(a)}{\leq} \sum_{(R,S)\in\Gamma\cap \mathcal{P}_m(t)\times\mathcal{P}_n(t)} e^{-mD(R\|P)}e^{-nD(S\|Q)}\nn\\
&\stackrel{(b)}{\leq} (m+1)^{t} (n+1)^{t}\sup_{(R,S)\in\Gamma} e^{-mD(R\|P)}e^{-nD(S\|Q)},
\end{align}
where $(a)$ and $(b)$ are due to  Lemma~\ref{lem:typeprob} and Lemma~\ref{lem:numEmpDistribution}, respectively. This gives \[\liminf_{m,n\to\infty}-\frac{1}{m+n}\log \mathbf{P}_{y^mx^n}((\hat{P}_m,\hat{Q}_n)\in\Gamma)\geq \inf_{(R,S)\in\Gamma} cD(R\|P)+(1-c)D(S\|Q),\]
and hence the lower bound by noting that $\Gamma\in\operatorname{cl}\Gamma$. Indeed, when the right hand side is finite, the infimum over $\Gamma$ equals the infimum over $\operatorname{cl}\Gamma$ as a result of the continuity of KLD for finite alphabets.

\subsection{Polish Sample Space}
We consider the general case with $\mathcal X$ being a Polish space. Now $\mathcal{P}$ is the space of probability measures on $\mathcal X$ endowed with the topology of weak convergence. To proceed, we introduce another topology on $\mathcal P$ and an equivalent definition of the KLD.

\paragraph{$\tau$-topology:} denote by $\Pi$ the set of all partitions $\mathcal A=\{A_1,\ldots, A_t\}$ of $\mathcal X$ into a finite number of measurable sets $A_i$. For $P\in\mathcal P$, $\mathcal A\in\Pi$, and $\zeta>0$, denote 
\begin{align}
\label{eqn:opentautoplogy}
U(P,\mathcal A, \zeta) = \{P'\in\mathcal P:|P'(A_i)-P(A_i)|<\zeta, i=1,\dots,t\}.
\end{align}
The $\tau$-topology on $\mathcal P$ is the coarsest topology in which the mapping $P\to P(F)$ are continuous for every measurable set $F\subset\mathcal X$. A base for this topology is the collection of the sets (\ref{eqn:opentautoplogy}). We will use $\mathcal P_\tau$ when we refer to $\mathcal P$ endowed with this $\tau$-topology, and write the interior and closure of a set $\Gamma\in\mathcal P_\tau$ as $\operatorname{int}_\tau\Gamma$
and $\operatorname{cl}_\tau\Gamma$, respectively. We remark that the $\tau$-topology is stronger than the weak topology: any open set in $\mathcal P$ w.r.t.~weak topology is also open in $\mathcal P_\tau$ (see more details in \cite{Csiszar2006simple,Dembo2009}). The product topology on $\mathcal P_\tau\times\mathcal P_\tau$ is determined by the base of the form of 
\[U(P,\mathcal A_1, \zeta_1)\times U(Q, \mathcal A_2, \zeta_2),\]
for $(P,Q)\in\mathcal P_\tau\times\mathcal P_\tau$, $\mathcal A_1,\mathcal A_2\in\Pi$, and $\zeta_1,\zeta_2>0$. We still use $\operatorname{int}_{\tau}(\Gamma)$ and $\operatorname{cl}_{\tau}(\Gamma)$ to denote the interior and closure of a set $\Gamma\subset\mathcal P_\tau\times\mathcal P_\tau$. As there always exists $\mathcal A\in\Pi$ that refines both $\mathcal A_1$ and $\mathcal A_2$, any element from the base has an open subset \[\tilde{U}(P,Q,\mathcal A,\zeta):=U(P,\mathcal A, \zeta)\times U(Q, \mathcal A, \zeta)\subset\mathcal P_\tau\times\mathcal P_\tau,\]
for some $\zeta >0$. 

\paragraph{Another definition of the KLD:} an equivalent definition of the KLD will also be used:
\begin{align}
D(P\|Q)=\sup_{\mathcal A\in\Pi} \sum_{i=1}^t P(A_i)\log\frac{P(A_i)}{Q(A_i)}=\sup_{\mathcal A\in\Pi}D(P^{\mathcal A}\|Q^{\mathcal A}),\nn
\end{align}
with the conventions $0\log 0=0\log\frac{0}{0}=0$ and $a\log\frac{a}{0}=+\infty$ if $a>0$. Here $P^{\mathcal A}$ denotes the discrete probability measure $(P(A_1),\ldots,P(A_t))$ obtained from probability measure $P$ and partition $\mathcal A$. It is not hard to verify that for $0<c<1$,
\begin{align}
\label{eqn:KLDdef}
cD(R\|P)+(1-c)D(S\|Q)&=c\sup_{\mathcal{A}_1\in\Pi}D(R^{\mathcal A_1}\|P^{\mathcal A_1})+(1-c)\sup_{\mathcal A_2\in\Pi}D(S^{\mathcal A_2}\|Q^{\mathcal A_2})\nn\\
&=\sup_{\mathcal{A}\in\Pi}\left(cD\left(R^{\mathcal A}\|P^{\mathcal A}\right)+(1-c)D\left(S^{\mathcal A}\|Q^{\mathcal A}\right)\right),
\end{align}
due to the existence of $\mathcal{A}$ that refines both $\mathcal{A}_1$ and $\mathcal A_2$ and the log-sum inequality \cite{Cover2006}.

We are ready to show the extended Sanov's theorem with Polish space.

\paragraph{Upper bound}
It suffices to consider only non-empty open $\Gamma$. If $\Gamma$ is open in $\mathcal P\times\mathcal P$, then $\Gamma$ is also open in $\mathcal P_\tau\times\mathcal P_\tau$. Therefore, for any $(R,S)\in\Gamma$, there exists a finite (measurable) partition $\mathcal A= \{A_1,\ldots,A_t\}$ of $\mathcal X$ and $\zeta>0$ such that 
\begin{align}
\label{eqn:opensubset}
\tilde{U}(R,S,\mathcal A,\zeta)=
\left\{(R',S'):|R(A_i)-R'(A_i)|<\zeta,|S(A_i)-S'(A_i)|<\zeta,i=1,\ldots,t\right\}\subset\Gamma.
\end{align}

Define the function $T:\mathcal X\to\{1,\ldots,t\}$ with $T(x)=i$ for $x\in A_i$. Then $(\hat P_m, \hat Q_n)\in\tilde{U}(R,S,\mathcal A,\zeta)$ with $R,S\in\Gamma$ if and only if the empirical measures $\hat P^{\circ}_m$ of $\{T(y_1),\ldots, T(y_m)\}:=T(y^m)$ and $\hat Q^{\circ}_n$ of $\{T(x_1),\ldots, T(x_n)\}:=T(x^n)$ lie in 
\[U^{\circ}(R,S,\mathcal A, \zeta)=\{(R^\circ,S^{\circ}):|R^{\circ}(i)-R(A_i)|<\zeta, |S^\circ(i)-S(A_i)|<\zeta,i=1,\ldots, t\}\subset \mathbb R^t\times\mathbb R^t.\]
Thus, we have
\begin{align}\mathbf{P}_{y^mx^n}((\hat P_m, \hat Q_n)\in\Gamma)&\geq\mathbf{P}_{y^mx^n}((\hat P_m, \hat Q_n)\in\tilde{U}(R,S,\mathcal A, \zeta))\nn\\
&=\mathbf{P}_{T(y^m)T(x^n)}((\hat P_m^{\circ}, \hat Q_n^{\circ})\in U^{\circ}(R,S,\mathcal A, \zeta)).\nn
\end{align}
As $T(x)$ and $T(y)$ takes values from a finite alphabet and $U^{\circ}(R,S,\mathcal A, \zeta)$ is open, we obtain that 
\begin{align}
&~\limsup_{m,n\to\infty}-\frac{1}{m+n}\log\mathbf{P}_{y^mx^n}((\hat P_m,\hat Q_n)\in\Gamma)\nn\\\leq&~ \limsup_{m,n\to\infty}-\frac{1}{m+n}\log\mathbf{P}_{T(y^m)T(x^n)}((\hat P_m^{\circ},\hat Q_n^{\circ})\in U^{\circ}(R,S,\mathcal A, \zeta))\nn\\
\leq&~\inf_{(R^\circ,S^\circ)\in U^{\circ}(R,S,\mathcal A, \zeta)} cD(R^\circ\|P^{\mathcal A})+(1-c)D(S^\circ\|Q^{\mathcal A})\nn\\
=&~\inf_{(R',S')\in\tilde{U}(R,S,\mathcal A, \zeta)} cD(R'^{\mathcal A}\|P^{\mathcal A})+(1-c)D(S'^{\mathcal A}\|Q^{\mathcal A})\nn\\
\leq&~ cD(R\|P)+(1-c)D(S\|Q),
\end{align}
where we have used definition of KLD in Eq.~(\ref{eqn:KLDdef})  and $(R,S)\in\tilde{U}(R,S,\mathcal A, \zeta)$ in the last inequality.  As $(R,S)$ is arbitrary in $\Gamma$, the lower bound is established by taking infimum over $\Gamma$.

\paragraph{Lower bound} With notations
\[\Gamma^{\mathcal A}=\{(R^{\mathcal{A}},S^\mathcal{A}):(R,S)\in\Gamma\},~ \Gamma(\mathcal A)=\{(R,S):(R^{\mathcal  A},S^{\mathcal{A}})\in\Gamma^{\mathcal{A}}\},\]
where $\mathcal A=\{A_1,\ldots,A_t\}$ is a finite partition, it holds that
\begin{align}
&~\mathbf{P}_{y^mx^n}((\hat P_m,\hat Q_n)\in\Gamma)\nn\\
\leq&~\mathbf P_{y^mx^n}((\hat P_m,\hat Q_n)\in\Gamma({\mathcal{A}}))\nn\\
=&~\mathbf P_{y^mx^n}((\hat P_m^{\mathcal A},\hat Q_n^{\mathcal A})\in\Gamma^{\mathcal A} \cap\mathcal{P}_{n}(t)\times{\mathcal P_m}(t))\nn\\
\leq&~(n+1)^t(m+1)^t\max_{(R^\circ,S^\circ)\in\Gamma^{\mathcal A}\cap\mathcal{P}_{n}(t)\times{\mathcal P_m(t)}}\mathbf P_{y^mx^n}\left(\hat{P}_n=R^\circ,\hat{Q}_m=S^\circ\right)\nn\\
\leq&~(n+1)^t(m+1)^t \exp\left(-\inf_{(R,S)\in\Gamma}  \left(nD(R^{\mathcal A}\|P^\mathcal{A})+mD(S^{\mathcal A}\|Q^\mathcal{A})\right)\right),\nn
\end{align}
where the last two inequalities are from Lemmas~\ref{lem:numEmpDistribution} and \ref{lem:typeprob}. As the above holds for any $\mathcal A\in\Pi$, Eq.~(\ref{eqn:KLDdef}) indicates
\begin{align}
&~\limsup_{m,n\to\infty}\frac{1}{m+n}\log\mathbf P_{y^mx^n}((\hat P_m,\hat Q_n)\in\Gamma)\nn\\
\leq&~\inf_{\mathcal{A}}\left(-\inf_{(R,S)\in\Gamma}  \left(cD(R^{\mathcal A}\|P^\mathcal{A})+(1-c)D(S^{\mathcal A}\|Q^\mathcal{A})\right)\right)\nn\\
=&~-\sup_{\mathcal{A}}\inf_{(R,S)\in\Gamma}  cD(R^{\mathcal A}\|P^\mathcal{A})+(1-c)D(S^{\mathcal A}\|Q^\mathcal{A}).\nn
\end{align}
Then the remaining of obtaining the lower bound is to show  
\[\sup_{\mathcal{A}}\inf_{(R,S)\in\Gamma} cD(R^{\mathcal A}\|P^\mathcal{A})+(1-c)D(S^{\mathcal A}\|Q^\mathcal{A})\geq \inf_{(R,S) \in\operatorname{cl}\Gamma}  cD(R\|P)+(1-c)D(S\|Q).\]




Assuming, without loss of generality, that the left hand side is finite, we only need to show

\[\operatorname{cl}\Gamma\cap B(P,Q,\eta)\neq\varnothing,\]
whenever \[\eta>\sup_{\mathcal{A}}\inf_{(R,S)\in\Gamma} cD(R^{\mathcal A}\|P^\mathcal{A})+(1-c)D(S^{\mathcal A}\|Q^\mathcal{A}).\] Here $B(P,Q,\eta)$ is the divergence ball defined as follows
\[B(P,Q,\eta)=\left\{(R,S):cD(R\|P)+(1-c)D(S\|Q)\leq\eta\right\},\]
which is compact in $\mathcal P\times\mathcal P$~w.r.t.~the weak topology, due to the lower semi-continuity of $D(\cdot\|P)$ and $D(\cdot\|Q)$ as well as the fact that $0<c<1$.

To this end, we first show the following:
\begin{align}
\label{eqn:clGamma}
\operatorname{cl}\Gamma=\bigcap_{\mathcal{A}}\operatorname{cl}\Gamma(\mathcal{A}).
\end{align}
The inclusion is obvious since $\Gamma\in\Gamma(\mathcal{A})$. The reverse means that if $(R,S)\in \operatorname{cl}\Gamma(\mathcal A)$ for each $\mathcal{A}$, then any neighborhood of $(R,S)$ w.r.t.~the weak convergence intersects $\Gamma$. To verify this, let $O(R,S)$ be a neighborhood of $(R,S)$ w.r.t.~the weak convergence, then there exists $\tilde{U}(R,S,\mathcal B,\zeta)\in O(R,S)$ over a finite partition $\mathcal B$ as $O(R,S)$ is also open in $\mathcal P_\tau\times\mathcal P_\tau$. Furthermore, the partition $\mathcal B$ can be chosen to refine $\mathcal A$ so that $\operatorname{cl}\Gamma(\mathcal B)\subset\operatorname{cl}\Gamma(\mathcal A)$. As $\tau$-topology is stronger than the weak topology,  a closed set in the $\mathcal P_\tau\times\mathcal P_\tau$ is closed in $\mathcal P\times\mathcal P$, and hence $\operatorname{cl}\Gamma(\mathcal B)\subset \operatorname{cl}_{\tau} \Gamma(\mathcal B)$. That $(R,S)\in\operatorname{cl}_\tau\Gamma(\mathcal B)$ implies that there exists $(R',S')\in\tilde{U}(R,S,\mathcal B,\zeta)\cap\Gamma(\mathcal B)$. By the definition of $\Gamma(\mathcal B)$, we can also find $(\tilde{R},\tilde{S})\in\Gamma$ such that $\tilde{R}(B_i)=R'(B_i)$ and $\tilde{S}(B_i)=S'(B_i)$ for each $B_i\in\mathcal B$, and hence  $(\tilde{R},\tilde{S})\in\tilde{U}(R,S,\mathcal B, \zeta)$. In summary, we have $(\tilde{R},\tilde{S})\in\tilde{U}(R,S,\mathcal B,\zeta)\subset O(R,S)$ and  $(\tilde{R},\tilde{S})\in\Gamma$. Therefore, $\Gamma\cap O(R,S)\neq\varnothing$ and the claim follows.

Next we show that, for each partition $\mathcal A$, 
\begin{align}
\Gamma(\mathcal A)\cap B(P,Q,\eta)\neq\varnothing.
\end{align}
By Eq.~(\ref{eqn:KLDdef}), there exists $(\tilde{P},\tilde Q)$ such that 
$cD(\tilde{P}^{\mathcal A}\|P^{\mathcal A})+(1-c)D(\tilde{Q}^{\mathcal A}\|Q^{\mathcal A})\leq\eta$. For such $(\tilde P, \tilde Q)$, we can construct $(P',Q')\in\Gamma(\mathcal A)$ as 
\begin{align}
P'(F)&=\sum_{i=1}^t\frac{\tilde{P}(A_i)}{P(A_i)}P(F\cap A_i),\nn\\
Q'(F)&=\sum_{i=1}^t\frac{\tilde{Q}(A_i)}{Q(A_i)}Q(F\cap A_i),\nn
\end{align}
for any measurable subset $F\subset\mathcal X$. If $P(A_i)=0$ ($Q(A_i)=0$) and hence $\tilde{P}(A_i)=0$ ($\tilde{Q}(A_i)=0$), as $D(\tilde P^{\mathcal A}\|P^{\mathcal A})<\infty$ ($D(\tilde Q^{\mathcal A}\|Q^{\mathcal A})<\infty$), for some $i$, the corresponding term in the above equation is set equal to $0$. Then $(P',Q')$ belongs to $\Gamma(\mathcal A)$ and also lies in $B(P,Q,\eta)$. The latter is because $D(P'\|P)=D(\tilde{P}^{\mathcal{A}}\|Q^{\mathcal A})$ and $D(Q'\|Q)=D(\tilde{Q}^{\mathcal A}\|Q^{\mathcal A})$: one can verify that any $\mathcal B$ that refines $\mathcal A$ satisfies \[D({P}'^{\mathcal B}\|P^{\mathcal B})=D(\tilde{P}^{\mathcal A}\|P^{\mathcal A}), D({Q}'^{\mathcal B}\|Q^{\mathcal B})=D(\tilde{Q}^{\mathcal A}\|Q^{\mathcal A}).\]

For any finite collection of partitions $\mathcal A_i\in\Pi$ and $\mathcal A\in\Pi$ refining each $\mathcal A_i$, each $\Gamma(\mathcal A_i)$ contains $\Gamma(\mathcal A)$. This implies that 
\[\bigcap_{i=1}^r\left(\Gamma(\mathcal A_i)\cap B(p,q,\eta)\right)\neq\varnothing,\]
for any finite $r$. Finally, the set $\operatorname{cl}\Gamma(\mathcal A)\cap B(P,Q,\eta)$ for any $\mathcal A$ is compact due to the compactness of $B(P,Q,\eta)$, and any finite collection of them has non-empty intersection. It follows that all these sets is also non-empty. This completes the proof.

\section{Proof of Theorem~\ref{thm:mainresult1}}
\label{sec:proofmainresult}
\begin{proof} 
According to Theorem~\ref{thm:MMDmetrize}, $d_k$ metrizes the weak convergence over $\mathcal P$. For convenience, we will write the type-I and type-II error probabilities as $\alpha_{m,n}$ and  $\beta_{m,n}$, respectively; we will also use $\beta$ to denote the type-II error exponent. That $\alpha_{m,n}\leq\alpha$ is clear from Lemma~\ref{lem:gammanm}, and we only need to show that $\beta_{m,n}$ vanishes exponentially as $m$ and $n$ scale. 
	
	We first show $\beta\geq D^*$. With a fixed $\gamma>0$, we have $\gamma_{m,n}\leq \gamma$ for sufficiently large $n$ and $m$. Therefore,
	\begin{align}	
	\label{eqn:eqn1}
	\beta&=\liminf_{m,n\to\infty}-\frac{1}{m+n}\log \mathbf{P}_{y^mx^n}(d_k(\hat P_m, \hat Q_n)\leq\gamma_{m,n})\nn\\
	&\geq\liminf_{m,n\to\infty}-\frac{1}{m+n}\log\mathbf{P}_{y^mx^n}(d_k(\hat P_m, \hat Q_n)\leq\gamma)\nn\\
	&\geq\inf_{(R,S):d_k(R,S)\leq\gamma}cD(R\|P)+(1-c)D(S\|Q)\nn\\
	&:=D_\gamma^*,
	\end{align}
	where the last inequality is from the extended Sanov's theorem and that $d_k$ metrizes weak convergence of $\mathcal P$ so that $\{(R,S):d_k(R,S)\leq\gamma\}$ is closed in the product topology on $\mathcal P\times\mathcal P$. Since $\gamma>0$ can be arbitrarily small, we have 
	\[\beta\geq\lim_{\gamma\to 0^{+}}D^*_\gamma,\]
	where the limit on the right hand side must exist as $D^*_{\gamma}$ is positive, non-decreasing when $\gamma$ decreases, and bounded by $D^*$ that is assumed to be finite. Then it suffices to show 
	\begin{align}
	\lim_{\gamma\to0^{+}} D_\gamma^*=D^*.\nn
	\end{align}

	To this end, let $(R_\gamma, S_\gamma)$ be such that  $d_k(R_\gamma,S_\gamma)\leq\gamma$ and $cD(R_\gamma\|P)+(1-c)D(S_\gamma\|Q)=D^*_\gamma$. Notice that $R_\gamma$ and $S_\gamma$ must lie in
	\[\left\{W:D(W\|P)\leq\frac{D^*}c,  D(W\|Q)\leq\frac{D^*}{1-c}\right\}:= \mathcal W,\]
	for otherwise $D_\gamma^*>D^*$. We remark that $\mathcal W$ is a compact set in $\mathcal P$ as a result of the lower semi-continuity of KLD w.r.t.~the weak topology on $\mathcal P$ \cite{VanErven2014RenyiKLD,Dembo2009}. Existence of such a pair can be seen from the facts that $\{(R,S):d_k(R,S)\leq \gamma\}$ is closed and convex, and that both $D(\cdot{\|P})$ and $D(\cdot\|Q)$ are convex functions \cite{VanErven2014RenyiKLD}.
	
	Assume that $D^*$ cannot be achieved. We can write 
	\begin{align}
	\label{eqn:assu}
	\lim_{\gamma\to0^+}D^*_{\gamma}=D^*-\epsilon,
	\end{align}
	for some $\epsilon>0$. By the definition of lower semi-continuity, there exists a $\kappa_W>0$ for each $W\in\mathcal W$ such that 
	\begin{align}
	\label{eqn:geqhalf}
	cD(R\|P)+(1-c)D(S\|Q)\geq cD(W\|P)+(1-c)D(W\|Q)-\frac\epsilon 2	\geq  D^*-\frac\epsilon 2,
	\end{align}
	whenever $R$ and $S$ are both from 
	\[\mathcal S_W=\left\{R:d_k(R,W)<\kappa_W\right\}.\]
	Here the last inequality  comes from  the definition of $D^*$ given in Theorem~\ref{thm:mainresult1}. To find a contradiction, define 
	\[\mathcal S_W'=\left\{R:d_k(R,W)<\frac{\kappa_W}{2}\right\}.\]
	Since $S_W'$ is open and $\bigcup_W\mathcal S_{W}'$ covers $\mathcal W$, the compactness of $\mathcal W$ implies that there exists finite $\mathcal S_W'$'s, denoted by $\mathcal S_{W_1}',\ldots,\mathcal S_{W_N}'$, covering $\mathcal W$. Define $\kappa^*=\min_{i=1}^N\kappa_{W_i}>0$. Now let $\gamma<{\kappa^*}/{2}$ as $\gamma$ can be made arbitrarily small. Since $\bigcup_{i=1}^N \mathcal S'_{W_i}$ covers $\mathcal W$, we can find a $W_i$ with $R_\gamma\in \mathcal S_{W_i}'\subset\mathcal S_{W_i}$. Thus, it holds that \[d_k(S_\gamma,W_i)\leq d_k(S_\gamma,R_\gamma)+d_k(R_\gamma,W_i)<\kappa_{W_i}.\]
	That is, $S_\gamma$ also lies in $\mathcal S_{W_i}$. By Eq.~(\ref{eqn:geqhalf}) we get
	\[cD(R_\gamma\|P)+(1-c)D(S_\gamma\|Q)\geq D^*-\epsilon/2.\]
	However, by our assumption in Eq.~(\ref{eqn:assu}), it should hold that
	\[cD(R_\gamma\|P)+(1-c)D(S_\gamma\|Q)\leq D^*-\epsilon.\]
	Therefore, $\beta\geq D^*$.
	
	The other direction can be simply seen from the optimal type-II error exponent in Theorem~\ref{thm:upperbound}. Alternatively, we can use Chernoff-Stein lemma in a similar manner to the proof of Theorem~\ref{thm:simple1}. Let $P'$ be such that $cD(P'\|P)+(1-c)D(P'\|Q)=D^*.$
	Such $P'$ exists because $0<D^*<\infty$ and $D(\cdot\|P)$ and $D(\cdot\|Q)$ are convex w.r.t.~$\mathcal P$. That $D^*$ is bounded implies that both $D(P'\|P)$ and $D(P'\|Q)$ are finite. We  have 
	\begin{align}
	\beta_{m,n}=&~\mathbf P_{y^mx^n}(d_k(\hat P_m,\hat Q_n)\leq \gamma_{m,n})\nn\\
	\stackrel{(a)}{\geq}&~\mathbf P_{y^mx^n}(d_k(P',\hat P_m)+d_k(P',\hat Q_n)\leq{\gamma_{m,n}})\nn\\
	\stackrel{(b)}{\geq}&~\mathbf P_{y^mx^n}(d_k(P',\hat P_m)\leq\gamma_{m}, d_k(P',\hat Q_m)\leq{\gamma_{n}})\nn\\
	=&~P(d_k(P',\hat P_m)\leq\gamma_{m})\,Q(d_k(P',\hat Q_n)\leq{\gamma_{n}}),\nn
	\end{align}
	where $(a)$ and $(b)$ are from the triangle inequality of the metric $d_k$, and we pick $ \gamma_n=\sqrt{2K/n}(1+\sqrt{-\log\alpha})$, and  $\gamma_m=\sqrt{2K/m}(1+\sqrt{-\log\alpha})$ so that $\gamma_{m,n}>\gamma_n+\gamma_m$. Then Lemma~\ref{lem:gamman} implies  $P'(d_k(P',\hat P_m)\leq\gamma_m)> 1-\alpha$. For now assume that $D(P'\|P)> 0$ and $D(P'\|Q)>0$. We can regard $\{y^m:d_k(P',\hat P_m)\leq\gamma_m\}$ as an acceptance region for testing $H_0:y^m\sim P'$ and $H_1:y^m\sim P$. Clearly, this test performs no better than the optimal level $\alpha$ test for this simple hypothesis testing in terms of the type-II error probability. Therefore, Chernoff-Stein lemma implies 
	\begin{align}
	\label{eqn:PP}
	\liminf_{m\to\infty}-\frac{1}{m}\log P(d_k(P',\hat P_m)\leq\gamma_m)\leq D(P'\|P).
	\end{align}
	Analogously, we have
	\begin{align}
	\label{eqn:QQ}
	\liminf_{n\to\infty}-\frac{1}{n}\log Q(d_k(P', \hat{Q}_n)\leq\gamma_n)\leq D(P'\|Q).
	\end{align}
	
	Now assume without loss of generality that $D(P'\|P)=0$, i.e., $P'=P$. Then $D(P'\|Q)>0$ under the alternative hypothesis $H_1:P\neq Q$, and Eq.~(\ref{eqn:QQ}) still holds. Using Lemma~\ref{lem:gamman}, we have $P(d_k(P',\hat P_m)\leq\gamma_m)>1-\alpha$, which gives zero exponent. Therefore, Eq.~(\ref{eqn:PP}) holds with $P'=P$.
	
	As $\lim_{m,n\to\infty}\frac{m}{m+n}=c$, we conclude that	\[\beta=\liminf_{m,n\to\infty}-\frac{1}{m+n}\log\beta_{m,n}\leq D^*.\]
	The proof is complete.
\end{proof}

\section{Proof of Theorem~\ref{thm:upperbound}}
\label{sec:optimality}
\begin{proof}
	Let $P'$ be such that $cD(P'\|P)+(1-c)D(P'\|Q)=D^*$. Consider first $D(P'\|P)\neq0$ and $D(P'\|Q)\neq0$. Since $D^*$ is assumed to be finite, we have both $D(P'\|P)$ and $D(P'\|Q)$ being finite. This implies that $P'$ is absolutely continuous w.r.t.~both $P$ and $Q$, so the Radon-Nikodym derivatives ${dP'}/{dP}$ and ${dP'}/{dQ}$ exist. 
	
	Define two sets 
	\begin{equation}
	\begin{aligned}
	\label{eqn:kldset}
	A_m &= \left\{y^m:D(P'\|P)-\epsilon\leq\frac{1}{m}\log\frac{dP'(y^m)}{dP(y^m)}\leq D(P'\|P)+\epsilon\right\},\\
	B_n &= \left\{x^n:D(P'\|Q)-\epsilon\leq\frac{1}{n}\log\frac{dP'(x^n)}{dQ({x^n})}\leq D(P'\|Q)+\epsilon\right\},
	\end{aligned}
	\end{equation}
	Recall the definition of the KLD: $D(P'\|P)=\mathbf{E}_{x\sim P'}\log(dP'(x)/dP(x))$ and $D(P'\|Q)=\mathbf{E}_{x\sim P'}\log(dP'(x)/dQ(x))$. By law of large numbers,  we have for any given $\epsilon>0$,
	\begin{align}
	\label{eqn:kldsetlargeProb}
	\mathbf P_{y^mx^n}(A_m\times B_n)\geq 1-\epsilon,~\text{for large enough}~m~\text{and}~n,
	\end{align}
	with $y^m$ and $x^n$ i.i.d.~$\sim P'$. 
	
	Now consider the type-II error probability of level $\alpha$ tests. First, for a level $\alpha$ test, we have its acceptance region satisfies
	\begin{align}
	\label{eqn:anyalphatestLargeProb}
	\mathbf P_{y^mx^n}(\Omega_0'(m,n))> 1-\alpha,
	\end{align}
	when  $y^m$ and $x^n$ i.i.d.~$\sim P'$, i.e., when the null hypothesis $H_0: P=Q$ holds. Then under the alternative hypothesis $H_1:P\neq Q$, we have 
	\begin{align}
	&~\mathbf P_{y^mx^n} (\Omega_0'(m,n))\nn\\
	\geq&~\mathbf P_{y^mx^n} (A_m\times B_n\cap\Omega_0'(m,n))\nn\\
	=&~\int_{A_m\times B_n\cap\Omega_0'(m,n)} dP(y^m)\,dQ(x^n)\nn\\
	\stackrel{(a)}{\geq}&~\int_{A_m\times B_n\cap\Omega_0'(m,n)}2^{-m(D(P'\|P)+\epsilon)}2^{-n(D(P'\|Q)+\epsilon)} dP'(y^m)\,dP'(x^n)\nn\\
	=&~2^{-mD(P'\|P)-n(D(P'\|Q)-(m+n)\epsilon} \int_{A_m\times B_n\cap\Omega_0'(m,n)}dP'(y^m)\,dP'(x^n)\nn\\
	\stackrel{(b)}{\geq}&~2^{-mD(P'\|P) -nD(P'\|Q)-(m+n)\epsilon}(1-\alpha-\epsilon),\nn
	\end{align}
	where $(a)$ is from Eq.~(\ref{eqn:kldset}) and $(b)$ is due to Eqs.~(\ref{eqn:kldsetlargeProb}) and (\ref{eqn:anyalphatestLargeProb}). Thus, when $\epsilon$ is small enough so that $1-\alpha-\epsilon>0$, we get 
	\begin{align}
	\label{eqn:upp}
	\liminf_{m,n\to\infty}-\frac{1}{m+n}\log\mathbf P_{y^mx^n} (\Omega_0'(m,n))&\leq \liminf_{m,n\to\infty}-\frac{1}{m+n}\left(mD(P'\|P)+n(D(P'\|Q)+(m+n)\epsilon\right)\nn\\&=D^*+\epsilon.
	\end{align}
	
	If a test is an asymptotic level $\alpha$ test, we can replace $\alpha$ by $\alpha+\epsilon'$ where $\epsilon'$ can be made arbitrarily small provided that $m$ and $n$ are large enough. Thus, Eq.~(\ref{eqn:upp}) holds too. Finally, since $\epsilon$ can also be arbitrarily small, we conclude that \[\liminf_{m,n\to\infty}-\frac{1}{m+n}\log\mathbf P_{y^mx^n} (\Omega_0'(m,n))\leq D^*.\]		
	If $P'=P$, then $A_m$ contains all $y^m\in\mathcal X^m$ and the above procedure gives the same result. 
\end{proof}

\section{Experiments}
\label{sec:exp}
This section presents empirical results of the MMD and KSD based goodness-of-fit tests in the finite sample regime. We note that there have been extensive experiments in \cite{Chwialkowski2016Goodness,Gorham2017measuringKSD,Liu2016GoodnessFit,Jitkrittum2017linearGoodness} and the sample size $m$ drawn from $P$ is usually fixed for the kernel two-sample test. As such, we only consider two toy experiments and let $m$  scale as required in Theorem~\ref{thm:2sample1}. 

We evaluate the following tests with a fixed level $\alpha=0.1$, all using Gaussian kernel $k(x,y)=e^{-\|x-y\|_2^2/(2w)}$: {1)~{\tt{Simple}}:} the simple kernel test $d_k(P,\hat Q_n)$. The acceptance threshold is estimated by drawing i.i.d.~samples from $P$, i.e., the Monte Carlo method. The number of trials is $500$. 2)~{\tt Two-sample}: the two-sample test $d_k(\hat P_m, \hat Q_n)$ with $m=n^{1.5}$. Threshold is obtained from the bootstrap method in \cite{Gretton2012}, with $500$ bootstrap replicates. 3)~{\tt{KSD}}: the KSD based test $d_S^2(P, \hat Q_n)$. We use wild bootstrap method from \cite{Chwialkowski2016Goodness} with $500$ replicates to estimate the $\alpha$-quantile.

{\bf Gaussian vs.~Laplace.} We use a similar experiment setting in \cite{Jitkrittum2017linearGoodness}. Consider $P:\mathcal N(0, 2\sqrt2)$ and $Q:\operatorname{Laplace} (0,2)$, a zero-mean Laplace distribution with scale parameter $2$. The parameters are chosen so that $P$ and $Q$ have the same mean and variance. We pick a fixed bandwidth $w=1$ for all the kernel based tests and repeat $500$ trials of each sample size $n$ for both hypotheses. We also evaluate the likelihood ratio test {\tt{LR}}, an oracle approach assuming both $P$ and $Q$ are known. In Figure~\ref{fig:pes2}, {\tt LR} has the lowest type-II error probabilities as expected, while {\tt Simple} and {\tt Two-sample} perform slightly better than {\tt KSD}.  As shown in Figure~\ref{fig:pes1}, all the kernel based tests have the type-I error probabilities around the given level $\alpha=0.1$, except for {\tt KSD} with $n=5$ samples.
\begin{figure}[ht]
	\centering
	\begin{subfigure}[b]{0.4\textwidth}
		\includegraphics[width=\textwidth]{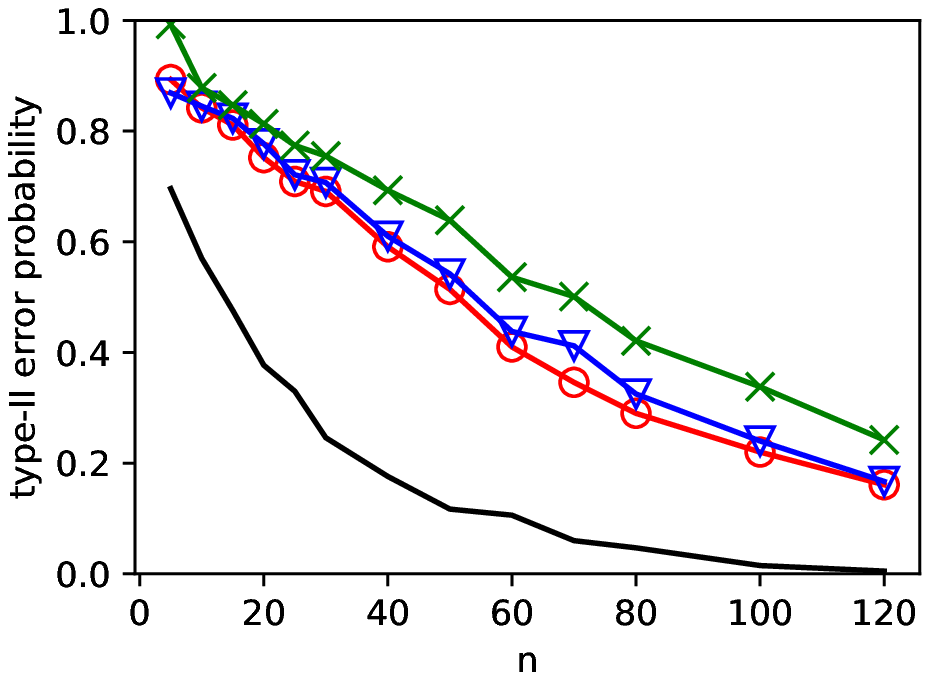}		
		\caption{type-II error}
		\label{fig:pes2}
	\end{subfigure}
	\begin{subfigure}[b]{0.4\textwidth}
		\includegraphics[width=\textwidth]{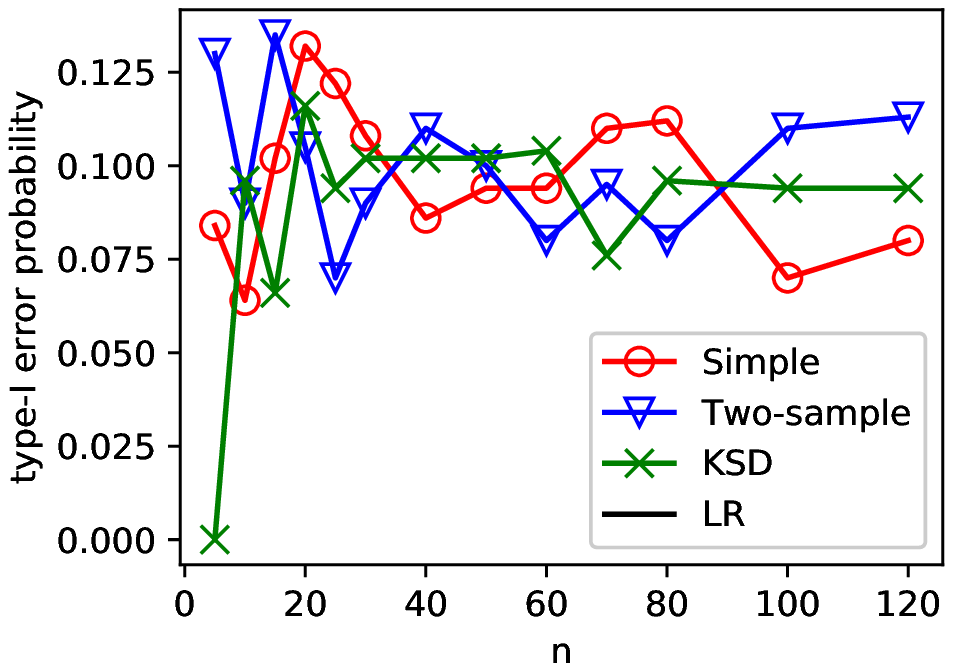}
		\caption{type-I error}
		\label{fig:pes1}
	\end{subfigure}
	\caption{Gaussian vs.~Laplace.}
\end{figure}

{\bf Gaussian Mixture.} The next experiment is taken from \cite{Liu2016GoodnessFit}. The i.i.d.~observations $x^n$ are drawn from $Q:\sum_{i=1}^5 a_i\,\mathcal N(x;\mu_i,\sigma^2)$ with $a_i=1/5$, $\sigma^2=1$, and $\mu_i$ randomly drawn from $\operatorname{Uniform}[0,10]$. We then generate $P$ by adding standard Gaussian noise (perturbation) to $\mu_i$. In \citep{Liu2016GoodnessFit}, the sample number $m$ drawn from $P$ is fixed while the observed sample number $n$ varies. We report the type-II error probabilities in Figure~2, averaged over $500$ random trials.
\begin{figure}[ht]
	\centering
	\begin{subfigure}[b]{0.4\textwidth}
		\includegraphics[width=\textwidth]{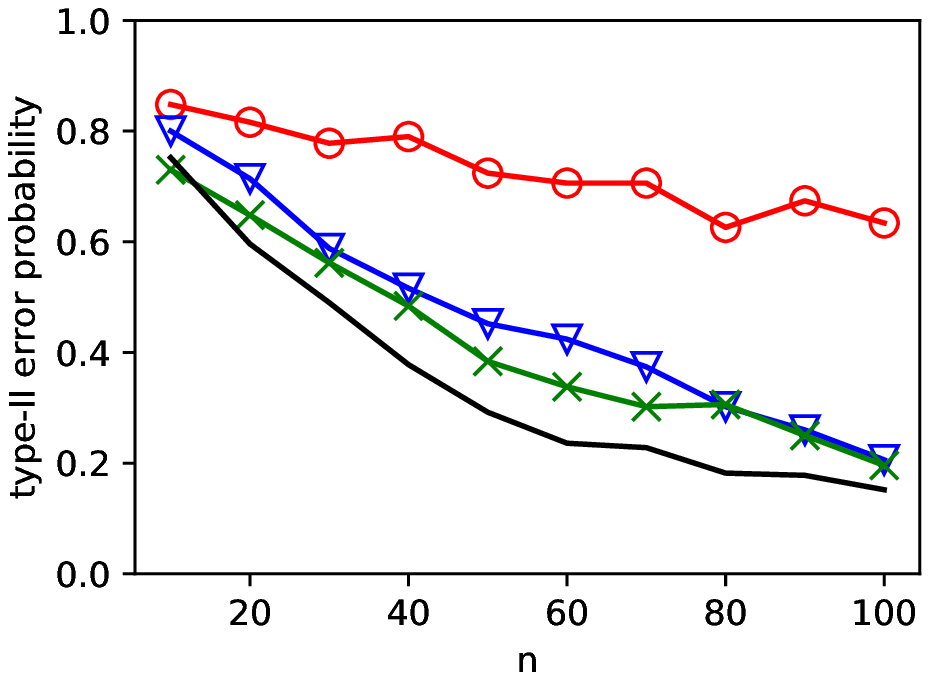}
		\caption{type-II error}
		\label{fig:MOG1}
	\end{subfigure}
	\begin{subfigure}[b]{0.4\textwidth}
		\includegraphics[width=\textwidth]{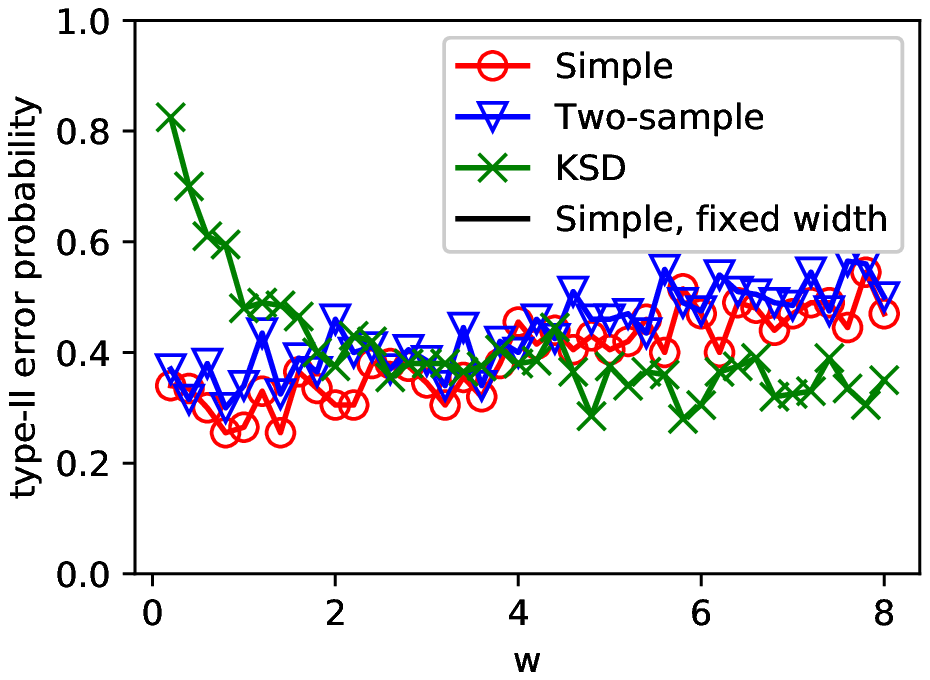}
		\caption{type-II error}
		\label{fig:MOG2}
	\end{subfigure}
	\centering
	\caption{Gaussian mixture. (a) median bandwidth for {\tt Simple}, {\tt Two-sample}, and {\tt KSD}, and a fixed bandwidth $w = 1$ for
		{\tt Simple}; (b) fixing $n = 50$ samples and varying kernel bandwidths.}
	\label{fig:MOG}
\end{figure}

With the median heuristic for bandwidth choice, {\tt KSD} and {\tt Two-sample} perform similarly whereas {\tt Simple} has its type-II error probability decreasing slowly, as shown in Figure~\ref{fig:MOG1}. Picking a fixed bandwidth $w=1$ for {\tt Simple} again results in a better performance. In light of the role of kernels, we then search over the kernel bandwidths in $[0, 8]$ for a fixed sample size $n=50$. In Figure~\ref{fig:MOG2}, {\tt Simple} and {\tt Two-sample} tend to achieve lower type-II error probabilities when $w$ is small, while {\tt KSD} has  a lower error probability around $w=5$. The optimal type-II error probabilities of {\tt Simple} and {\tt KSD} are close and slightly lower than that of {\tt Two-sample}. While computational issue is not the focus of this paper, we do observe that {\tt KSD} is more efficient in this experiment, as it does not need to draw samples.

Whereas we cannot tell much statistical difference in our experiments, some experiments in the literature showed that the MMD based tests performed better than the KSD based tests and others showed the opposite \cite{Chwialkowski2016Goodness,Gorham2017measuringKSD,Liu2016GoodnessFit,Jitkrittum2017linearGoodness}. The finite sample performance depends on kernel choice as well as specific distributions. Under the universal setting, no test is known to be optimal in terms of the type-II error probability subject to a given level constraint. Statistical optimality can only be established in the large sample limit, as the one considered in the present work.



\end{document}